\documentclass{article} 
\usepackage{arxiv}

\usepackage{amsmath,amsfonts,bm}

\def\eqref#1{equation~\ref{#1}}

\def\1{\bm{1}}

\DeclareMathAlphabet{\mathsfit}{\encodingdefault}{\sfdefault}{m}{sl}
\SetMathAlphabet{\mathsfit}{bold}{\encodingdefault}{\sfdefault}{bx}{n}

\usepackage[utf8]{inputenc} 
\usepackage[T1]{fontenc}    
\usepackage{hyperref}       
\usepackage{url}            
\usepackage{booktabs}       
\usepackage{amsfonts}       
\usepackage{nicefrac}       
\usepackage{microtype}      
\usepackage{xcolor}         
\usepackage{times} 
\usepackage{amsmath}
\usepackage{amsthm}
\usepackage{natbib}
\usepackage{amssymb}
\usepackage{graphicx}
\usepackage{wrapfig}
\usepackage{booktabs}
\usepackage{multirow}
\usepackage{float}
\usepackage{placeins}
\usepackage{natbib} 

\newtheorem{theorem}{Theorem}

\newtheorem{proposition}{Proposition}
\newtheorem{corollary}{Corollary}
\newtheorem{assumption}{Assumption}
\newtheorem{definition}{Definition}
\newtheorem{remark}{Remark}

\theoremstyle{remark}

\DeclareMathOperator{\hsic}{\text{HSIC}}

\title{Principled Nonlinear Concept Erasure and Steering}

\author{Idan Simai \\
  Department of Computer Science \\
  Bar-Ilan University \\
  \texttt{idansi98@gmail.com} \And
  Shauli Ravfogel \\
  Center for Data Science \\
  New York University \\
  \texttt{shauli.ravfogel@nyu.edu} \And
  Uri Shaham \\
  Department of Computer Science \\
  Bar-Ilan University \\
  \texttt{uri.shaham@biu.ac.il}}

\date{}                                  
\renewcommand{\shorttitle}{Principled Nonlinear Concept Erasure and Steering}  

\begin{document}

\maketitle

\begin{abstract}
Learned representations often encode sensitive attributes, such as gender or race, that may be undesirable or unsafe to retain.
Concept erasure aims to remove this information while preserving task-relevant content. 
Existing methods remain incomplete: linear methods provide guarantees only against linear probes, while nonlinear methods achieve stronger empirical erasure but lack formal guarantees and often sacrifice utility.
In this work, we introduce a principled conditional autoencoder framework for nonlinear concept erasure. We follow the principle that erasure should remove only the sensitive attribute and nothing else, and we show that, under the stated assumptions, perfect reconstruction together with population-level independence is \emph{sufficient} for perfect nonlinear guardedness, with preservation guarantees in the imperfect-reconstruction regime.
Empirically, we attain a superior utility-erasure trade-off across multiple datasets, sensitive attributes, and language models, surpassing even the supervised variants of competing methods despite using no target-task labels, while being simpler to optimize and converging substantially faster.
Beyond erasure, our framework yields \emph{concept conversion (representation steering)} as a by-product: by changing only the target concept at inference time, it produces representations corresponding to different concept values---at no additional training cost---a capability no prior erasure method provides, and on which we likewise demonstrate strong empirical performance.
\end{abstract}

\section{Introduction}
Pretrained Language Models (PLMs) have become the backbone of modern Natural Language 
Processing (NLP), achieving strong performance across a wide range of tasks. However,
their representations often encode sensitive attributes---such as gender, race, or other 
demographic factors---which can lead to biased and potentially unfair predictions
\cite{bolukbasi2016man, deArteaga2019bias, ravfogel2020null}.
\emph{Concept erasure} addresses this issue by modifying the learned representations to remove 
such undesirable concepts, while retaining as much task-relevant information as possible 
\cite{basu-roy-chowdhury-etal-2021-adversarial, ravfogel2022linear, yamada2022leace}. The task may be performed either in a supervised manner, leveraging task labels 
to guide utility preservation, or in an unsupervised manner, where no task labels 
are assumed.

Existing concept erasure methods can be broadly divided into two families. Linear methods are well-studied~\cite{bolukbasi2016man, ravfogel2020null, ravfogel2022linear}, with some coming with formal theoretical guarantees of removing  linearly-accessible information~\cite{yamada2022leace}. However, by construction, all such methods protect only against \emph{linear} adversaries. A nonlinear probe can readily recover the sensitive attribute from the erased representation, preventing \emph{exhaustive} removal~\cite{ravfogel-etal-2022-adversarial}. Nonlinear methods attempt to close this gap~\cite{elazar2018adversarial, ravfogel-etal-2022-adversarial, chowdhury2022learning, basu2023robust, shao2023gold, saillenfest2025nonlinearconcepterasuredensity, jourdan2023taco, akbari2025obliviator}, and while they achieve stronger empirical erasure, existing approaches offer limited guarantees of complete nonlinear concept removal, leaving uncertainty as to whether the attribute is removed or merely harder to recover. This leaves practitioners with an uncomfortable choice--provable but incomplete linear erasure, or empirically stronger but theoretically ungrounded nonlinear erasure.

In this work, we propose an autoencoder model for nonlinear concept erasure
that comes equipped with theoretical guarantees under stated population-level
conditions. Our approach is motivated by a simple 
principle: the goal of concept erasure is to remove \emph{only} the sensitive 
attribute--and nothing else. This is naturally enforced by maximizing the mutual 
information between the erased representation $Z$ and the original input $X$, subject 
to $Z$ being statistically independent of the sensitive attribute $C$.
Under perfect reconstruction and
independence, we prove that the learned representation is an entropy-preserving 
function of the complement of the sensitive attribute, and further establish guarantees 
for the near-perfect regime.

Figure~\ref{fig:bios_bert} presents a representative headline result on \textsc{Bias in Bios}~\cite{deArteaga2019bias}: our fully unsupervised method achieves a strictly better utility--erasure trade-off than all competing approaches, even outperforming supervised Obliviator~\cite{akbari2025obliviator}, the current state-of-the-art. Beyond these significant empirical improvements, we show
that these two properties are \emph{sufficient} conditions for perfect nonlinear
guardedness in autoencoder-based models. Furthermore, our framework is
significantly easier to train and converges faster than prior nonlinear methods.

In addition to erasure, our framework yields a bonus capability: \emph{concept conversion (representation steering)}.  Since the decoder takes both the erased representation $Z$ and the sensitive attribute $C$ as input, 
substituting a different value $C'$ at inference time generates a representation 
corresponding to a different concept value--with no additional training required.
\begin{figure}[!t]
    \centering
    \includegraphics[width=0.65\textwidth]{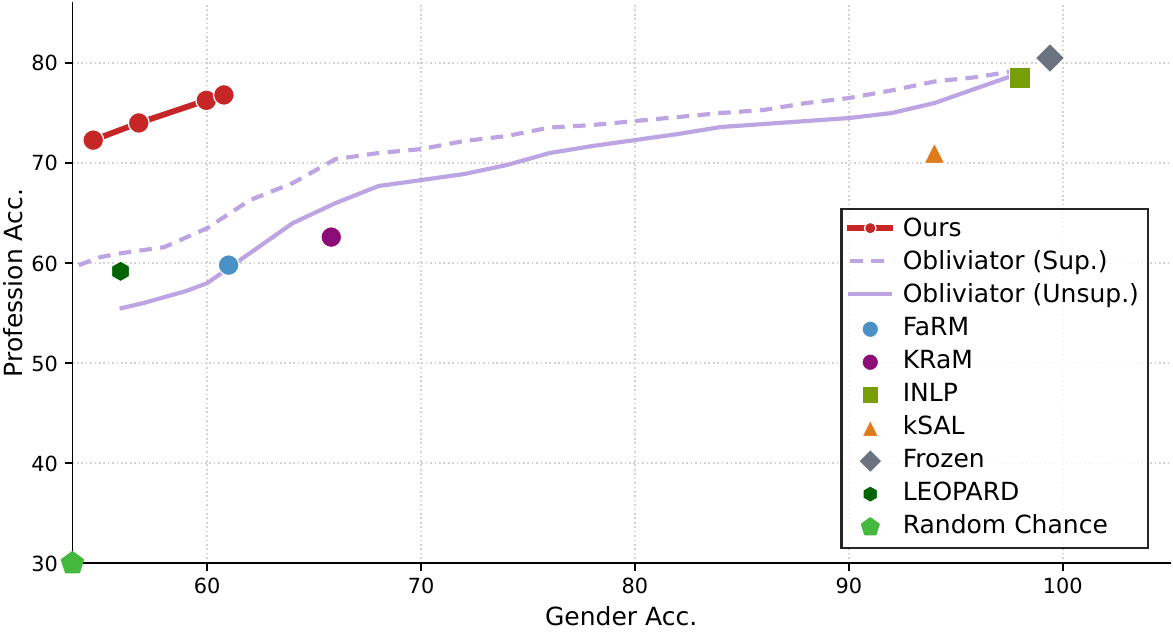}
    \vspace{-0.75em}
\caption{
Strong concept erasure result on BERT representations for profession prediction with gender as the sensitive concept.
The plot trades off gender leakage ($x$-axis; lower is better) against profession utility ($y$-axis; higher is better).
Red points correspond to different independence strengths in our objective.
Our method achieves a substantially better utility--erasure trade-off than competing baselines, including both supervised and unsupervised Obliviator, while being unsupervised. 
}
\label{fig:bios_bert}
    \label{fig:bert_bias_in_bios}
\end{figure}

Our contributions are fourfold: (i) \textbf{Theoretical guarantees.} We prove that the two properties targeted by our objective, are \emph{sufficient}
conditions for perfect nonlinear guardedness in autoencoder-based models.
We further establish preservation guarantees in the imperfect-reconstruction
regime, quantifying how much complementary (non-sensitive) information is
retained as a function of the reconstruction error.
(ii) \textbf{Concept conversion.} Our trained decoder enables conversion (steering) across concept values at no additional training cost: by simply substituting a different sensitive attribute value at inference time, we can generate a representation as if the input had a different concept value.
(iii) \textbf{New state-of-the-art.} Without target-task labels in our objective, our model achieves superior utility-erasure trade-offs compared to  existing methods, across multiple datasets, sensitive 
attributes, and language models, outperforming even the supervised variants of competing methods.
(iv) \textbf{Simpler and faster optimization.} Our two-term objective is straightforward to optimize, significantly simpler than prior nonlinear methods, and converges faster.

\section{Related Work}

\paragraph{Linearly Guarded Erasure.}
Early concept erasure methods are linear: they identify directions in representation space associated with the sensitive attribute and project them out~\cite{bolukbasi2016man,ravfogel2020null}. INLP~\cite{ravfogel2020null} repeatedly projects representations onto the nullspace of linear concept classifiers, R-LACE~\cite{ravfogel2022linear} formulates erasure as a minimax game against a linear adversary, and LEACE~\cite{yamada2022leace} gives a closed-form least-squares projection with provable guarantees for linear concept removal. However, these methods guard only against linear adversaries: nonlinear probes can still recover the sensitive attribute from the erased representation. Moreover, none supports concept steering.

\paragraph{Nonlinearly Guarded Erasure.}
Several works propose nonlinear erasure methods to overcome the limits of linear approaches. Adversarial methods \cite{elazar2018adversarial, 
ravfogel-etal-2022-adversarial} train nonlinear adversaries to suppress the sensitive attribute, but erasure is often incomplete and optimization can be unstable.  Rate-distortion based methods such 
as FaRM \cite{chowdhury2022learning} and its kernelized variant KRaM \cite{basu2023robust} offer stronger empirical erasure, but rely on specialized rate-distortion objectives. 
Other approaches include density matching~\cite{saillenfest2025nonlinearconcepterasuredensity}, which aligns distributions across sensitive groups, and TaCo~\cite{jourdan2023taco}, which removes concept-specific latent directions. Obliviator 
\cite{akbari2025obliviator}, the current state-of-the-art, directly targets nonlinear 
statistical dependencies via RKHS and achieves strong empirical performance. While these 
nonlinear methods advance the empirical state of the art, they provide limited guarantees of complete nonlinear erasure, and none supports concept steering. Our 
method addresses both gaps simultaneously.

\paragraph{Independence-Constrained Representation Learning.}
Our method builds on prior work that learns representations which preserve useful information while suppressing dependence on a specified variable. Theoretically, \cite{shaham2022discovery} show that an autoencoder with an independence-promoting discriminator can recover a hidden independent component up to an entropy-preserving transformation. 
A related latent-independence perspective appears in speech attribute conversion~\cite{svirsky2025provable}; here we study its implications for concept erasure and guarded representations. 
In practice, attribute-invariant encoder--decoders have been used for controllable generation and style transfer in images~\cite{lample2017fader}, voice and music~\cite{qian2019autovc,nachmani2019unsupervised,mor2019universal}, and text~\cite{john2019disentangled}. Similar ideas appear in fairness-oriented representation learning~\cite{louizos2016variational,edwards2016censoring,ganin2016domain}, often adversarially; however, adversarial invariance alone does not guarantee concept removal, as post-hoc classifiers may still recover the attribute~\cite{elazar2018adversarial}. Thus, our contribution is not a new independence mechanism, but a concept-erasure analysis showing that reconstruction plus independence yields nonlinear guardedness guarantees and, as a by-product, steering.

\paragraph{Representation-Level Steering.}
Representation-level steering modifies frozen representations to change a concept while preserving other content. Earlier work studied feature or data repair for fairness, reducing protected-attribute information or aligning group-conditional distributions~\cite{feldman2015certifying,johndrow2019algorithm,gordaliza2019obtaining}. 
More recently, Representation Surgery~\cite{singh2024representation} formalized affine maps such as mean matching and mean--covariance matching, while related counterfactual methods use similar moment-matching interventions for explanation and bias mitigation~\cite{lemberger2024explaining}. 
These approaches provide affine or distributional transformations, but remain limited to population-statistic-based interventions and do not guarantee exact counterfactual recovery or nonlinear guardedness. 
In contrast, our conditional encoder--decoder yields nonlinear steering by decoding the concept-independent representation with a target concept value, requiring no additional objective and remaining tied to our erasure guarantees.

\section{Problem Formulation}
\label{sec:problem_formulation}
Let $C \in \mathcal{C}$ be an observed random variable representing the \emph{sensitive concept} that we wish to erase, where
$\mathcal{C}$ is a discrete set, and let
$U \in \mathcal{U}$ denote an \emph{unobserved complementary random variable} containing
all non-sensitive information that should be preserved. Let $X \in \mathcal{X}$ be the observed
representation. We assume that $X$ is generated by an unknown deterministic map
\(
X = f(U, C),
\)
where $f : \mathcal{U} \times \mathcal{C} \to \mathcal{X}$ is injective. We further assume that $U$ and $C$ are independent.

\begin{remark}
The assumption $U \perp\!\!\!\perp C$ is required for our theoretical guarantees and is standard in ICA-style latent-variable models, but it is nevertheless idealized. If $U$ is statistically entangled with $C$, then part of the information in $U$ may itself reveal $C$, making perfect preservation of $U$ and erasure of $C$ no longer simultaneously possible. To understand what happens when independence holds only approximately, Appendix~\ref{app:robustness} presents a controlled simulation in which the dependence between $U$ and $C$ is varied continuously. We find that small deviations from independence lead to correspondingly small degradation in both erasure and steering, and the framework continues to deliver strong empirical performance well outside the perfect regime.

\end{remark}

Our setting can thus be viewed as a special case of nonlinear ICA in which one factor, $C$, is observed,
while the other factor, $U$, is latent and must be preserved.
In the context of concept erasure, $C$ is the information that should be removed from the learned
representation, whereas $U$ is exactly the complementary content that should remain. Let $\mathcal{P}_{UC}=\mathcal{P}_{U}\times \mathcal{P}_{C}$ be a distribution on $\mathcal{U} \times \mathcal{C}$ and let $\{(u_i, c_i)\}_{i=1}^n$ be $n$ samples from it,
and for all $i = 1, \ldots, n$, let $x_i = f(u_i, c_i)$.
Given only input-concept pairs, i.e., $\{(x_i, c_i)\}_{i=1}^n$, the corresponding
complementary factors $\{u_i\}_{i=1}^n$
are never observed. No task labels are assumed.

\paragraph{Concept erasure.}
Our goal is to learn an erased representation that removes information about the sensitive concept
$C$ while preserving the complementary information carried by $U$. To this end, we learn a
conditional encoder--decoder map
\(
g(x,c) = d(e(x),c),
\)
where
\(
e : \mathcal{X} \to \mathcal{Z},d : \mathcal{Z} \times \mathcal{C} \to \mathcal{X}
\)
are an encoder and decoder, respectively, and
\(
Z := e(X)
\)
is the erased representation.

\paragraph{Concept conversion.}
We are also interested in the induced \emph{concept conversion (representation steering)} task. Given an input $x = f(u,c)$ and a target concept value $c' \in \mathcal{C}$, the goal is to generate a converted sample corresponding to the same complementary factor $u$ under the new concept value $c'$, namely
\(
x' = f(u,c').
\)
In our framework, this is realized by
\(
g(x,c') = d(e(x),c').
\)
Thus, erasure and steering are studied here within the same conditional encoder--decoder framework.

\section{Proposed Method}
\subsection{Motivation}
\label{sec:motivation}

The guiding principle of unsupervised concept erasure is to remove \emph{only} the sensitive
concept and preserve everything else. In the notation of Section~\ref{sec:problem_formulation},
this suggests the ideal population-level objective
\begin{equation}
\max_{e} I(X;Z)
\qquad
\text{subject to}
\qquad
Z=e(X), \;\; Z \perp\!\!\!\perp C .
\label{eq:ideal_mi_objective}
\end{equation}
That is, among all representations that are statistically independent of the sensitive concept \(C\),
we seek the one that retains as much information as possible about the original input \(X\).

A key observation is that, under the independence constraint $Z \perp\!\!\!\perp C$,
maximizing $I(X;Z)$ is equivalent to minimizing the conditional uncertainty
$H(X \mid Z, C)$; the formal derivation is given in Appendix~\ref{app:mi_vs_hsic}.
Therefore, reconstruction from $(Z, C)$ is not merely a heuristic surrogate for
information preservation, but the correct operational objective. In particular, given a decoder $d$ with reconstruction
\(
\hat{X} = d(Z, C),
\)
driving the reconstruction error to zero forces $H(X \mid Z, C)$ to vanish,
and thus maximizes the amount of information about $X$ retained in the erased
representation among all $C$-independent representations.

This is the main reason we adopt an autoencoder formulation. The decoder plays a principled
role: it certifies that the erased representation still contains all non-sensitive information needed
to recover the original input once the concept value \(C\) is supplied. This same structure also
naturally enables steering, since the decoder can be queried with a new concept value
\(c'\).

This perspective also clarifies a limitation of HSIC-based preservation objectives such as the
one used in Obliviator. While HSIC is a useful dependence
measure, maximizing \(\mathrm{HSIC}(X,Z)\) is not generally equivalent to maximizing
\(I(X;Z)\). Suitably normalized or spectrally regularized formulations can mitigate this
issue and, under additional structure, recover the same ideal information-preserving
optimum. Obliviator employs normalization, but this does not in general enforce the
covariance whitening or spectral isotropy required for that equivalence. Thus, its
preservation objective remains distinct from mutual-information maximization. The two
measures agree at the minimum---for characteristic kernels, both vanish iff
\(X \perp\!\!\!\perp Z\)---but not in general at the maximum: mutual information is
invariant under invertible reparameterizations of \(Z\), whereas HSIC depends on the
chosen kernel geometry and bandwidth. As a result, HSIC maximization may favor
representations aligned with the chosen kernel rather than those preserving the largest
amount of information. Appendix~\ref{app:mi_vs_hsic} formalizes this distinction and
gives a simple counterexample.

For this reason, our model is built around two ingredients: reconstruction, which targets
information preservation, and statistical independence from the sensitive concept, which targets
erasure.

\subsection{Method}
\label{sec:method}
In our setting, \(X\) is a representation produced by a pretrained language model, \(C\) is the concept label to be erased (e.g., gender, race), and \(Z\) is the resulting erased representation.
We train an autoencoder, consisting of an encoder
\(
e:\mathcal{X}\to\mathcal{Z}
\)
and a decoder
\(
d:\mathcal{Z}\times\mathcal{C}\to\mathcal{X}
\).
Given an input \(x\) and its concept value \(c\), the encoder produces an erased representation
\(
z=e(x)
\),
and the decoder reconstructs the input as
\(
\hat{x}=d(z,c)=d(e(x),c).
\)

Our training objective combines two terms:
\begin{equation}
\mathcal{L}
=
\mathcal{L}_{\mathrm{rec}}
+
\lambda_{\mathrm{ind}} \mathcal{L}_{\mathrm{ind}},
\label{eq:objective}
\end{equation}
where \(\lambda_{\mathrm{ind}}>0\) controls the trade-off between reconstruction quality and concept removal.

\paragraph{Reconstruction.}
To preserve the complementary information in the input, we reconstruct \(x\) from the erased representation \(z\) together with the concept \(c\). We implement the reconstruction term using mean squared error:
\(
\mathcal{L}_{\mathrm{rec}}
=
\frac{1}{n}\sum_{i=1}^n
\left\|
x_i - d(e(x_i),c_i)
\right\|_2^2.
\)
\paragraph{Independence.}
To remove sensitive-concept information, we encourage statistical independence
between $Z = e(X)$ and $C$. In practice, we implement this term using an
HSIC-based dependence penalty: $L_{\mathrm{ind}} = \mathrm{HSIC}(Z,C)$.
Minimizing HSIC is appropriate here because, with characteristic kernels,
its population value is zero if and only if $Z$ and $C$ are independent,
which also implies $I(Z;C)=0$. Unlike in the preservation objective, we do
not rely on HSIC as a surrogate for mutual information.

Thus, the overall objective is simply built around the two ingredients required for unsupervised concept erasure: reconstruction, which preserves the non-sensitive information needed to recover the input once \(C\) is given, and independence, which removes information about the sensitive concept from the latent representation. At inference time, the erased representation is \(z=e(x)\), and steering is obtained by decoding the same representation with a target concept value \(c'\):
\(
g(x,c') = d(e(x),c').
\)

\section{Theoretical Analysis}

We now present the theoretical guarantees underlying our framework.

\paragraph{Scope of the guarantees.}

Our guarantees characterize the properties of representations that attain the
conditions targeted by our objective, rather than convergence of the finite,
non-convex training procedure itself. In the ideal regime, perfect
reconstruction together with population-level independence, $Z \perp C$,
is sufficient for perfect nonlinear guardedness and, as a by-product, yields
exact steering through the same decoder at no additional training cost.
Beyond this idealization, we quantify complementary-information preservation
directly as a function of the reconstruction error actually attained. In
practice, Eq.~(\ref{eq:objective}) optimizes finite-sample estimates and is not
guaranteed to reach the ideal population conditions. Nevertheless, across
datasets and language models, the learned representations achieve
state-of-the-art empirical erasure and steering performance, supported
by held-out dependence tests and nonlinear probes.

\paragraph{Discrete setting.}
We state the entropy-based results formally in the discrete setting, where the
central argument can be expressed cleanly using Shannon entropy. A fully general
continuous treatment would require additional measure-theoretic machinery that
would obscure the simple mechanism we aim to expose without changing its core
intuition. This theoretical restriction does not apply in practice: our experiments operate on continuous neural representations, and the
reconstruction-error guarantee in Section~\ref{sec:imperfect} is stated
directly in Euclidean error and does not rely on discreteness.

\subsection{The Perfect Regime}
\label{sec:perfect_regime}

We begin by isolating the two structural properties that underlie our ideal-case analysis and formalizing the target guarantee.

\textbf{Perfect reconstruction:} the decoder recovers the original input when conditioned on the true concept value, i.e.\ $d(Z, C) = X$ a.s., where $C$ is the concept variable that generated $X = f(U, C)$.

\textbf{Perfect independence:} the learned representation $Z = e(X)$ is statistically independent of the sensitive concept, $Z \perp\!\!\!\perp C$.

\begin{definition}[Perfect nonlinear guardedness]
\label{def:perfect_guardedness}
An autoencoder \(g(x,c)=d(e(x),c)\) is \emph{perfectly nonlinearly guarded} with respect to \(C\) if, letting \(Z=e(X)\),
\(
I(C;Z)=0,
\)
and there does not exist a deterministic representation \(Z' = e'(X)\) such that
\(
I(C;Z')=0, 
I(X;Z') > I(X;Z).
\)
Equivalently, \(Z\) is statistically independent of the sensitive concept \(C\), and is maximally informative about \(X\) among all \(C\)-independent deterministic representations of \(X\).
\end{definition}

Our first result shows that if the decoder reconstructs the input exactly and the erased representation is independent of the sensitive concept, then the representation preserves all the non-sensitive information in the input.
\begin{theorem}[Maximal information retention]
\label{thm:perfect_preservation}
Assume that
\(
\hat X = d(Z,C) = X \text{ a.s,}
\)
\(
Z \perp\!\!\!\perp C.
\)
Then:
\begin{equation}
I(X;Z) = H(X \mid C) = H(U).
\label{eq:perfect_ixz}
\end{equation}
\end{theorem}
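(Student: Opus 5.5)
We prove the two equalities in \eqref{eq:perfect_ixz} separately, working with Shannon entropies in the discrete setting. The first equality reduces to comparing two expansions of the joint entropy $H(X,Z,C)$. The second follows directly from the generative model $X=f(U,C)$.

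\textbf{Step 1: $I(X;Z)=H(X\mid C)$.} Since $Z=e(X)$ is a deterministic function of $X$, we have $H(Z\mid X)=0$. Hence
\[
I(X;Z)=H(Z)-H(Z\mid X)=H(Z).
\]
Perfect reconstruction gives $X=d(Z,C)$ almost surely, so $H(X\mid Z,C)=0$. Now expand $H(X,Z,C)$ in two ways:
\[
H(X,Z,C)=H(Z,C)+H(X\mid Z,C)=H(Z,C),
\]
\[
H(X,Z,C)=H(X,C)+H(Z\mid X,C)=H(X,C),
\]
where the second line again uses that $Z$ is a function of $X$. Therefore $H(Z,C)=H(X,C)$. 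By independence, $H(Z,C)=H(Z)+H(C)$. Combining these,
\[
H(Z)=H(X,C)-H(C)=H(X\mid C),
\]
which establishes $I(X;Z)=H(X\mid C)$.

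\textbf{Step 2: $H(X\mid C)=H(U)$.} Because $f$ is injective, the map $(U,C)\mapsto(X,C)$ is a bijection onto its image. Hence $H(X,C)=H(U,C)$, and so
\[
H(X\mid C)=H(U,C)-H(C)=H(U\mid C).
\]
The model assumption $U\perp\!\!\!\perp C$ then gives $H(U\mid C)=H(U)$.

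The argument requires no further machinery. The only delicate points are technical: the "a.s." equality must be correctly converted into vanishing conditional entropy, and the entropies must be finite, which holds in the discrete setting with finite entropy. Neither is a substantive obstacle.

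Step 1 is where both hypotheses are actually used. Independence is what turns the joint-entropy identity into a statement about $H(Z)$ alone. Without it, one only obtains $H(Z)\ge H(X\mid C)$, which can be strict.
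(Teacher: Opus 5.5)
Your proof is correct and follows essentially the same route as the paper. The paper obtains Step~1 from its Proposition~\ref{prop_2}, $I(X;Z)=I((X,C);Z)=I(C;Z)+I(X;Z\mid C)=H(X\mid C)-H(X\mid Z,C)$, which uses exactly the facts you use ($H(Z\mid X)=0$, $Z\perp\!\!\!\perp C$, $H(X\mid Z,C)=0$), phrased through the chain rule for mutual information rather than your two expansions of $H(X,Z,C)$. Step~2 likewise matches the paper's argument from injectivity of $f$ and $U\perp\!\!\!\perp C$; the only cosmetic difference is that the paper argues fiberwise, $H(X\mid C=c)=H(U\mid C=c)$, instead of using the joint bijection $(U,C)\mapsto(X,C)$.
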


\begin{corollary}[Optimality among \(C\)-independent deterministic representations]
\label{cor:optimality_c_independent}
Under the assumptions of Theorem~\ref{thm:perfect_preservation}, for every deterministic representation
\(
Z' = e'(X)
\)
such that \(I(C;Z')=0\), we have
\(
I(X;Z') \le I(X;Z).
\)
\end{corollary}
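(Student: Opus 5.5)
The plan is to bound $I(X;Z')$ from above by $H(X\mid C)$ for every admissible $Z'$, and then use Theorem~\ref{thm:perfect_preservation}, which says $I(X;Z)=H(X\mid C)$. We work in the discrete setting, with $Z'=e'(X)$ deterministic and $I(C;Z')=0$.

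First, because $Z'$ is a deterministic function of $X$, we have $H(Z'\mid X)=0$. Hence $I(X;Z')=H(Z')$.

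Second, the independence $I(C;Z')=0$ gives $H(Z')=H(Z'\mid C)$.

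Third, $Z'=e'(X)$ remains a deterministic function of $X$ once we condition on $C$. Since deterministic maps do not increase entropy, $H(Z'\mid C)\le H(X\mid C)$. Chaining the three steps yields
\[
I(X;Z') = H(Z') = H(Z'\mid C) \le H(X\mid C) = I(X;Z),
\]
where the final equality is Theorem~\ref{thm:perfect_preservation}.

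If one wants $H(X\mid C)=H(U)$ made explicit, it follows from injectivity of $f$ and $U\perp\!\!\!\perp C$. Given $C=c$, the map $u\mapsto f(u,c)$ is injective, so $H(X\mid C)=H(U\mid C)=H(U)$. This identity is already part of the theorem, so we need not reprove it.

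None of these steps is hard. The only point needing care is the inequality $H(Z'\mid C)\le H(X\mid C)$. It should be checked pointwise in $c$, applying $H(e'(X)\mid C=c)\le H(X\mid C=c)$ for each $c$ and then averaging over $C$. It also relies on the discrete-entropy setting the paper adopts, where entropies are finite and nonnegative.
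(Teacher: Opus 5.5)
Your argument is correct and is essentially the paper's proof. The paper applies Proposition~\ref{prop_2} to $Z'$ to get $I(X;Z') = H(X\mid C) - H(X\mid Z',C) \le H(X\mid C)$; your chain $H(Z') = H(Z'\mid C) \le H(X\mid C)$ is the same computation written out inline, since the slack in your inequality is exactly $H(X\mid Z',C)\ge 0$, and both proofs then close with $I(X;Z)=H(X\mid C)$ from Theorem~\ref{thm:perfect_preservation}.
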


\begin{corollary} [Sufficient conditions for perfect nonlinear guardedness]
\label{cor:perfect_guardedness}
Under the assumptions of Theorem~\ref{thm:perfect_preservation}, the autoencoder \(g\) is perfectly nonlinearly guarded in the sense of Definition~\ref{def:perfect_guardedness}.
\end{corollary}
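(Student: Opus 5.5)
Corollary~\ref{cor:perfect_guardedness} follows once we check the two clauses of Definition~\ref{def:perfect_guardedness} under the hypotheses of Theorem~\ref{thm:perfect_preservation}.

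The first clause, \(I(C;Z)=0\), holds immediately. The hypothesis \(Z \perp\!\!\!\perp C\) is equivalent, in the discrete setting we work in, to the joint law of \((C,Z)\) factorizing. That factorization is exactly the statement that the mutual information vanishes.

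The second clause is the nonexistence of a deterministic \(Z'=e'(X)\) with \(I(C;Z')=0\) and \(I(X;Z')>I(X;Z)\). This is the contrapositive of Corollary~\ref{cor:optimality_c_independent}. That corollary says every such \(C\)-independent deterministic \(Z'\) satisfies \(I(X;Z')\le I(X;Z)\), so no strictly better one can exist.

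For completeness, I would also recall why Corollary~\ref{cor:optimality_c_independent} holds, in case it is to be proved alongside.
\begin{itemize}
\item Since \(Z'=e'(X)\) is a function of \(X\), we have \(H(Z'\mid X)=0\), so \(I(X;Z')=H(Z')\).
\item Independence gives \(H(Z')=H(Z'\mid C)\).
\item Conditioning on \(C\) and using again that \(Z'\) is a function of \(X\) gives \(H(Z'\mid C)\le H(X\mid C)\).
\item Theorem~\ref{thm:perfect_preservation} identifies \(H(X\mid C)\) with \(I(X;Z)\).
\end{itemize}
Chaining these gives \(I(X;Z')\le I(X;Z)\).

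There is no substantive obstacle; the argument is bookkeeping. The only care needed is that the optimality comparison in Definition~\ref{def:perfect_guardedness} ranges over deterministic encoders. The step \(H(Z'\mid X)=0\) uses determinism essentially, and this matches the class quantified over in Corollary~\ref{cor:optimality_c_independent}.
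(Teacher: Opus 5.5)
Your proposal is correct and matches the paper's proof, which derives both clauses of Definition~\ref{def:perfect_guardedness} in one line from Corollary~\ref{cor:optimality_c_independent}; you are slightly more careful in noting that the clause \(I(C;Z)=0\) comes directly from the hypothesis \(Z \perp\!\!\!\perp C\). Your re-derivation of that corollary via \(H(Z'\mid C)\le H(X\mid C)\) is the same computation as the paper's use of Proposition~\ref{prop_2}, since for a deterministic \(C\)-independent \(Z'\) one has \(I(X;Z')=H(Z'\mid C)=H(X\mid C)-H(X\mid Z',C)\).
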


Three further consequences of Theorem~\ref{thm:perfect_preservation},
namely deterministic recovery of the complementary factor
\(H(U \mid Z, C) = 0\), exact preservation of the complementary information
\(I(U; Z \mid C) = H(U)\), and the entropy-preserving relation
\(H(Z) = H(U)\), together with the proofs of Theorem~\ref{thm:perfect_preservation}
and all corollaries above, are deferred to Appendix~\ref{app:perfect_regime_proofs}.

\subsection{The Imperfect Regime}
\label{sec:imperfect}
We now turn to the non-ideal regime. Since autoencoders are typically trained with mean-squared
reconstruction error, we seek guarantees stated directly in terms of MSE. The exact statement above
suggests a natural relaxation: if the reconstruction from \((Z,C)\) is nearly perfect, then \(Z\) should
preserve nearly all information about \(U\).

\paragraph{Why an additional assumption is needed.}
A small MSE in the observation space does not necessarily imply small error in the recovered
complementary factor \(U\). MSE controls distances in \(X\)-space, whereas preservation of \(U\)
depends on the stability of the inverse map from \((X,C)\) back to \(U\). To translate reconstruction error in \(X\)-space into
recovery error for \(U\), we impose the following regularity assumption on this inverse map.

\begin{assumption}[Lipschitz left-inverse]
\label{ass:lipschitz_left_inverse}
There exists a measurable map $h:\mathcal X \times \mathcal C \to \mathcal U$
such that
\(
h(f(u,c),c)=u,
\forall (u,c)\in \mathcal U \times \mathcal C,
\)
and for some constant \(L_h \ge 0\),
\(
\|h(x,c)-h(x',c)\|_2 \le L_h \|x-x'\|_2,
\forall x,x'\in \mathcal X,\ \forall c\in \mathcal C.
\)
\end{assumption}
Assumption~\ref{ass:lipschitz_left_inverse} states that, once \(c\) is fixed, the complementary
factor \(U\) can be recovered from \(x\) in a stable way. This is natural for a conditional
autoencoder: reconstruction from \((Z,C)\) is useful only if small errors in \(X\)-space do not
correspond to large errors in the latent factor \(U\). The Lipschitz condition rules out such
ill-conditioned inverse mappings from \((X,C)\) to \(U\).

All expectations in this subsection are taken with respect to the joint distribution induced by
\(
(U,C)\sim \mathcal{P}_{UC}, X=f(U,C), Z=e(X).
\)
Proposition~\ref{prop:mse_implies_preservation} shows that small reconstruction error in \(X\)-space
implies a small recovery error for the complementary factor \(U\). 
In this sense, the encoding preserves
the non-sensitive information up to a controlled distortion.
\begin{proposition}[Small reconstruction error implies approximate preservation]
\label{prop:mse_implies_preservation}
Assume that $\mathbb E\!\left[\|X-d(Z,C)\|_2^2\right] \le \varepsilon_{\mathrm{rec}}$,
and that Assumption~\ref{ass:lipschitz_left_inverse} holds. 
Define $\hat U := h(d(Z,C),C)$.
Then
\(
\mathbb E\!\left[\|U-\hat U\|_2^2\right] \le L_h^2\,\varepsilon_{\mathrm{rec}}.
\)

\end{proposition}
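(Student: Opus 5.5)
The argument is a pointwise Lipschitz bound followed by taking expectations. The key observation is that $U$ itself can be written through the left-inverse $h$ evaluated at the true input. Since $X=f(U,C)$ holds almost surely by the generative model, Assumption~\ref{ass:lipschitz_left_inverse} gives $h(X,C)=h(f(U,C),C)=U$ almost surely. Hence both $U$ and $\hat U$ are values of the same map $h(\cdot,C)$ with the concept argument held fixed at the realized $C$:
\[
U-\hat U = h(X,C)-h(d(Z,C),C).
\]

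Next I apply the Lipschitz condition with the same $c=C$ in both arguments. This is exactly why the decoder is conditioned on the true concept value. Pointwise, almost surely,
\[
\|U-\hat U\|_2 = \|h(X,C)-h(d(Z,C),C)\|_2 \le L_h\,\|X-d(Z,C)\|_2 .
\]
Both $X$ and $d(Z,C)$ must lie in $\mathcal X$, the domain on which the Lipschitz bound is assumed. I take the decoder's codomain to be $\mathcal X$, as in the problem formulation $d:\mathcal Z\times\mathcal C\to\mathcal X$, so this holds. Squaring both sides (both are nonnegative) gives $\|U-\hat U\|_2^2\le L_h^2\|X-d(Z,C)\|_2^2$ almost surely.

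Finally I take expectations under the joint law induced by $(U,C)\sim\mathcal P_{UC}$, $X=f(U,C)$, $Z=e(X)$. Monotonicity and linearity of expectation, together with the reconstruction hypothesis, give
\[
\mathbb E\|U-\hat U\|_2^2\le L_h^2\,\mathbb E\|X-d(Z,C)\|_2^2\le L_h^2\varepsilon_{\mathrm{rec}}.
\]

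The only point needing care is measurability, so that $\hat U$ is a random variable and the expectation is well defined. This follows because $h$ is measurable, and $e,d$ are measurable, as neural networks are continuous. If the left-hand expectation were infinite, the inequality would still hold trivially in the extended sense, but the bound itself shows it is finite. No step is a real obstacle: the result is a direct consequence of the Lipschitz left-inverse, and independence $Z\perp\!\!\!\perp C$ is not needed.
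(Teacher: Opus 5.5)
Your proposal is correct and follows essentially the same route as the paper's proof: you write $U=h(X,C)$ via the left-inverse, apply the Lipschitz bound with the concept argument fixed at $C$, square, and take expectations. Your added remarks on the decoder's codomain and on measurability are harmless refinements that the paper leaves implicit.
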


\paragraph{From MSE to information preservation.}
Proposition~\ref{prop:mse_implies_preservation} translates into an
information-theoretic guarantee whenever the alphabet $\mathcal{U}$
of the complementary factor $U$ is finite with separated support.
Informally, under these mild conditions, a small reconstruction error implies that the encoder loses almost no information about
the complementary factor. 
The full statement and proof are
deferred to Appendix~\ref{app:imperfect_regime}
(Corollary~\ref{cor:finite_alphabet_guardedness}), as well as the proof of Proposition~\ref{prop:mse_implies_preservation}.

\begin{remark}
Corollary~\ref{cor:finite_alphabet_guardedness} does not require any independence condition on $Z$ and $C$. Its conclusion is purely a preservation statement: reconstruction accuracy controls how well the complementary factor $U$ is retained.
\end{remark}

\subsection{Steering as a By-product of Erasure}
Because the decoder is conditioned on the concept value, the same model that achieves concept erasure can also be used for steering by decoding the erased representation with a different concept input. The next result shows that this holds exactly under the same ideal conditions as perfect guardedness, up to the following mild symmetry-breaking assumption on $U$.

\begin{assumption}[Asymmetric law of $U$]
\label{ass:asym_u}
The only measure-preserving automorphism $\pi:\operatorname{supp}(U)\to\operatorname{supp}(U)$
such that $\pi_{\#}P_U = P_U$ is the identity.
\end{assumption}

This is a mild symmetry-breaking assumption, needed only to rule out nontrivial measure-preserving relabelings of $U$. For a discrete $U$, it holds whenever distinct support points have distinct probabilities.

The following corollary shows that, for any $x=f(u,c)$ and $c'$, decoding with $c'$ yields exactly $f(u,c')$, preserving $u$ and changing only the concept. Thus, the same conditions that yield perfect nonlinear guardedness (Corollary~\ref{cor:perfect_guardedness}) also yield exact steering at no additional training cost.

\begin{corollary}[Exact steering under perfect reconstruction and independence]
\label{cor:conversion}
Suppose Assumption~\ref{ass:asym_u} holds, and let $g(x, c) = d(e(x), c)$ satisfy perfect reconstruction, $d(e(X), C) = X$ a.s., and perfect independence, $Z = e(X) \perp\!\!\!\perp C$. Then there exists a bijection $T : \mathrm{supp}(U) \to \mathrm{supp}(Z)$
such that
\(
e(f(u, c)) = T(u) ,\forall u \in \mathrm{supp}(U),\ \forall c \in \mathrm{supp}(C).
\)
Consequently, for every target concept value $c' \in \mathrm{supp}(C)$,
\(
d(e(f(U, C)), c') = f(U, c'), \text{a.s.}
\)
\end{corollary}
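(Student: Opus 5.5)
The plan is to show that for each concept value $c$ the encoder induces a bijection $T_c:\operatorname{supp}(U)\to\operatorname{supp}(Z)$, that these maps push $P_U$ onto the same law $P_Z$, and then to use Assumption~\ref{ass:asym_u} to conclude that all $T_c$ coincide. Following the discrete setting of the theoretical section, I take $U$ and $C$ discrete. Since $U\perp\!\!\!\perp C$, every pair $(u,c)\in\operatorname{supp}(U)\times\operatorname{supp}(C)$ has positive probability $P_U(u)P_C(c)>0$. Hence the almost-sure hypothesis $d(e(X),C)=X$ holds pointwise:
\[
d\bigl(e(f(u,c)),c\bigr)=f(u,c)\quad\text{for all } u\in\operatorname{supp}(U),\ c\in\operatorname{supp}(C).
\]

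For each $c\in\operatorname{supp}(C)$, define $T_c(u):=e(f(u,c))$ on $\operatorname{supp}(U)$.

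\emph{Injectivity.} If $T_c(u)=T_c(u')$, then applying $d(\cdot,c)$ and pointwise reconstruction gives $f(u,c)=f(u',c)$. Injectivity of $f$ then yields $u=u'$.

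\emph{Common law.} Conditional on $C=c$, we have $Z=T_c(U)$ with $U\sim P_U$, because $U\perp\!\!\!\perp C$. Since $Z\perp\!\!\!\perp C$, the conditional law of $Z$ given $C=c$ equals $P_Z$, so $(T_c)_{\#}P_U=P_Z$ for every $c$. In particular, the image $T_c(\operatorname{supp}(U))$ has full $P_Z$-mass. Each of its points carries positive mass, because $T_c$ is injective. Therefore the image is exactly $\operatorname{supp}(Z)$, and $T_c$ is a measure-preserving bijection $\operatorname{supp}(U)\to\operatorname{supp}(Z)$.

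Next, fix a reference value $c_0\in\operatorname{supp}(C)$ and, for any $c$, set $\pi:=T_c^{-1}\circ T_{c_0}$. This is a bijection of $\operatorname{supp}(U)$ onto itself with $\pi_{\#}P_U=(T_c^{-1})_{\#}P_Z=P_U$. By Assumption~\ref{ass:asym_u}, $\pi$ is the identity, so $T_c=T_{c_0}=:T$ for all $c$. This gives the first claim: $e(f(u,c))=T(u)$ for all $u\in\operatorname{supp}(U)$ and $c\in\operatorname{supp}(C)$.

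For the steering consequence, take $u\in\operatorname{supp}(U)$, $c\in\operatorname{supp}(C)$ and a target $c'\in\operatorname{supp}(C)$. Using the first claim twice and pointwise reconstruction at the pair $(u,c')$,
\[
d\bigl(e(f(u,c)),c'\bigr)=d\bigl(T(u),c'\bigr)=d\bigl(e(f(u,c')),c'\bigr)=f(u,c').
\]
Since $(U,C)$ lies in $\operatorname{supp}(U)\times\operatorname{supp}(C)$ almost surely, this gives $d(e(f(U,C)),c')=f(U,c')$ a.s.

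The main obstacle I expect is the passage from "almost surely" to "for every $(u,c)$ in the product of supports", and the identification of the image of $T_c$ with $\operatorname{supp}(Z)$. Both rely on discreteness: every support point has positive mass, and independence makes the joint support a full product. In a continuous setting, $T_c$ would only be defined up to null sets, and the automorphism in Assumption~\ref{ass:asym_u} would have to be understood modulo null sets. That is why I would state the argument in the discrete setting adopted for the entropy results and only remark on the continuous extension.
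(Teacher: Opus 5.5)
Your proposal is correct and follows the paper's proof essentially step for step. Both define $T_c(u)=e(f(u,c))$, derive its injectivity from reconstruction plus injectivity of $f$, use $U\perp\!\!\!\perp C$ and $Z\perp\!\!\!\perp C$ to obtain $(T_c)_{\#}P_U=P_Z$ for every $c$, apply Assumption~\ref{ass:asym_u} to the measure-preserving bijection $T_c^{-1}\circ T_{c_0}$, and then read off steering from reconstruction at $(u,c')$. You also explicitly upgrade the almost-sure reconstruction to a pointwise identity on $\operatorname{supp}(U)\times\operatorname{supp}(C)$, since every pair has positive mass under independence; the paper uses this step only implicitly, so your version is slightly more careful.
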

The proof is given in Appendix~\ref{app:conversion-proof}.

\section{Experiments}
\label{sec:experiments}
\subsection{Experimental Setup}
\label{subsec:setup}
We evaluate erasure and steering across multiple datasets and pretrained language models, using the same representations for both tasks.

\paragraph{Benchmarks and PLM embeddings.}
We evaluate on three standard concept-erasure benchmarks: \textsc{Bias in Bios}~\cite{deArteaga2019bias} 
(target: profession, $28$ classes; sensitive: gender), \textsc{Dial-Sentiment}~\cite{blodgett2016demographic} 
(target: sentiment; sensitive: race), and \textsc{Dial-Mention}~\cite{blodgett2016demographic} 
(target: mention; sensitive: race). We extract frozen representations from BERT-base~\citep{devlin2019bert}, 
GPT-2~\cite{radford2019language}, DeepSeek-LLM-7B~\cite{bi2024deepseek}, and LLaMA-3.2-1B~\cite{grattafiori2024llama}, 
using the \texttt{[CLS]} embedding for BERT and mean-pooled token embeddings for the remaining models, following~\cite{akbari2025obliviator}.

\paragraph{Baselines.} For erasure, we compare against Obliviator~\cite{akbari2025obliviator} in both its supervised and unsupervised variants, as well as FaRM~\cite{chowdhury2022learning}, KRaM~\cite{basu2023robust}, LEOPARD~\cite{saillenfest2025nonlinearconcepterasuredensity}, INLP~\cite{ravfogel2020null}, and kSAL~\cite{shao2023gold}.
For steering, we compare against Representation Surgery~\cite{singh2024representation}, the most directly related affine-steering framework, including its Mean Matching and Mean+Covariance Matching (MiMiC) interventions, and ActAdd~\citep{turner2024steeringlanguagemodelsactivation}.

\paragraph{Implementation details.}
We keep the main text focused on the empirical findings and defer all low-level implementation details, hyperparameters, and complexity analysis to Appendix~\ref{app:implementation}. 

\paragraph{Evaluation protocol.}
We organize the empirical evaluation around two tasks: concept erasure and
steering. For erasure, we train post-hoc classifiers on the
erased representations: a sensitive-attribute classifier to measure leakage and
a downstream-task classifier to measure utility. We study the resulting utility--erasure trade-off by training separate models with different values of the independence weight \(\lambda_{\mathrm{ind}}\) in Eq.~\ref{eq:objective}, yielding one point per value.
For steering, given an input representation $x$ with concept value
$c$ and a target concept value $c'$, we compute $d(e(x),c')$ and evaluate
whether the converted representation preserves downstream target information
while shifting toward the target concept. We assess these properties using downstream-task classifiers, fairness metrics, and post-hoc sensitive-concept classifiers.

\subsection{Concept Erasure Results}
We evaluate concept erasure with post-hoc classifiers. The $x$-axis reports sensitive-attribute accuracy, where lower is better and random chance is ideal; the $y$-axis reports downstream-task accuracy, where higher is better. Thus, better methods lie closer to the upper-left. Baselines are plotted in their standard form: iterative methods as trade-off curves and one-shot methods as single operating points. For Obliviator, the curve-selection protocol is described in Appendix~\ref{sec:exp_setup}.
\begin{figure}[!h]
    \centering
    \includegraphics[width=1.00\textwidth]{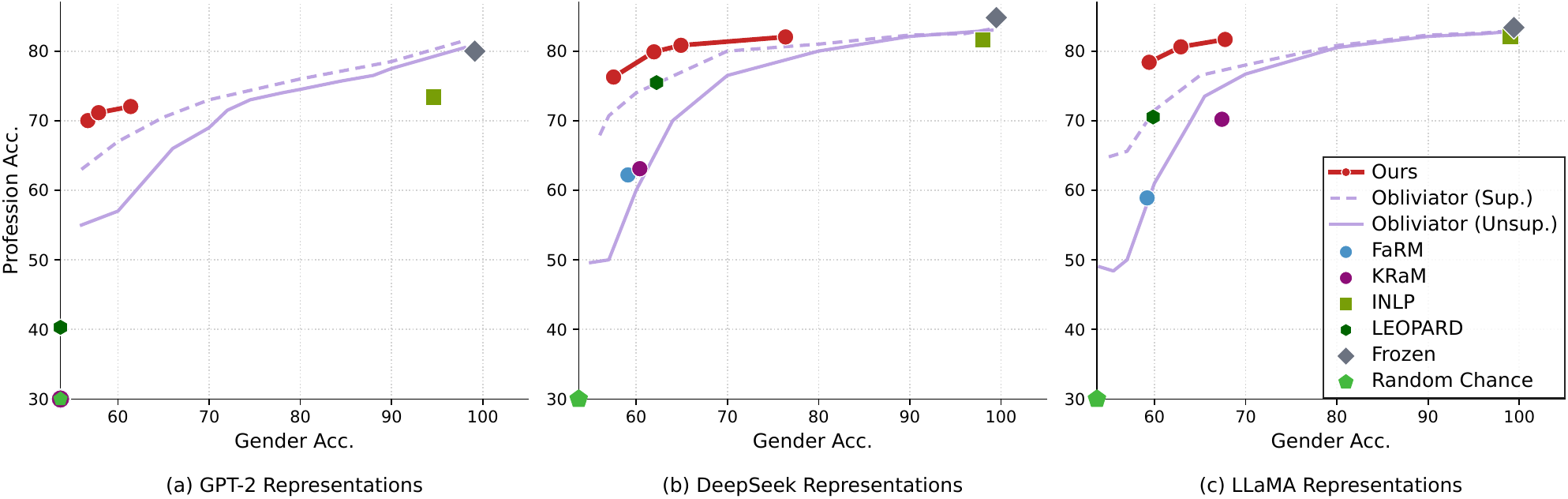}
    \vspace{-0.75em}
    \caption{
Concept erasure on \textsc{Bias in Bios} across frozen PLM representations.
Repeating Figure~\ref{fig:bios_bert} on GPT-2, DeepSeek, and LLaMA, our method achieves a stronger trade-off than competing baselines, including supervised Obliviator.
This shows that the advantage of reconstruction plus independence is not tied to one representation space, but generalizes across PLMs and embedding dimensions.
}
    \label{fig:bios_plms}
\end{figure}

\paragraph{Bias in Bios.}
Figures~\ref{fig:bios_bert} and~\ref{fig:bios_plms} evaluate concept erasure on
\textsc{Bias in Bios}, where the sensitive attribute is gender and the downstream
target is profession. On BERT representations, Figure~\ref{fig:bios_bert} shows
that our method achieves a substantially stronger utility--erasure trade-off. At approximately random-chance gender
accuracy, around $56\%$, our method preserves about $74\%$ profession accuracy,
compared with only about $56\%$ and $61\%$ for unsupervised and supervised
Obliviator, respectively. Thus, at essentially the same erasure level, our method
improves utility by roughly $18$ and $13$ percentage points, despite using no
profession labels during erasure. The remaining baselines either sacrifice
substantially more profession accuracy or retain considerably higher gender
leakage, placing them below our trade-off.\\
Figure~\ref{fig:bios_plms} shows that this advantage persists across other PLM
representations. On GPT-2, at around $58\%$ gender accuracy, our method retains
about $70\%$ profession accuracy, compared with roughly $56\%$--$65\%$ for
Obliviator. On DeepSeek, at around $57\%$ gender accuracy, we retain
$76.27\%$ profession accuracy, compared with roughly $50\%$--$70\%$ for
Obliviator. On LLaMA, at around $63\%$ gender accuracy, the corresponding
profession accuracies are about $81\%$ for our method versus $66\%$--$74\%$ for
Obliviator. These gaps support our central design choice:
using reconstruction as the preservation objective yields stronger utility retention
than HSIC-based preservation, and this advantage is not tied to a particular PLM
representation space.
\begin{figure}[!h]
    \centering
    \includegraphics[width=0.75\textwidth]{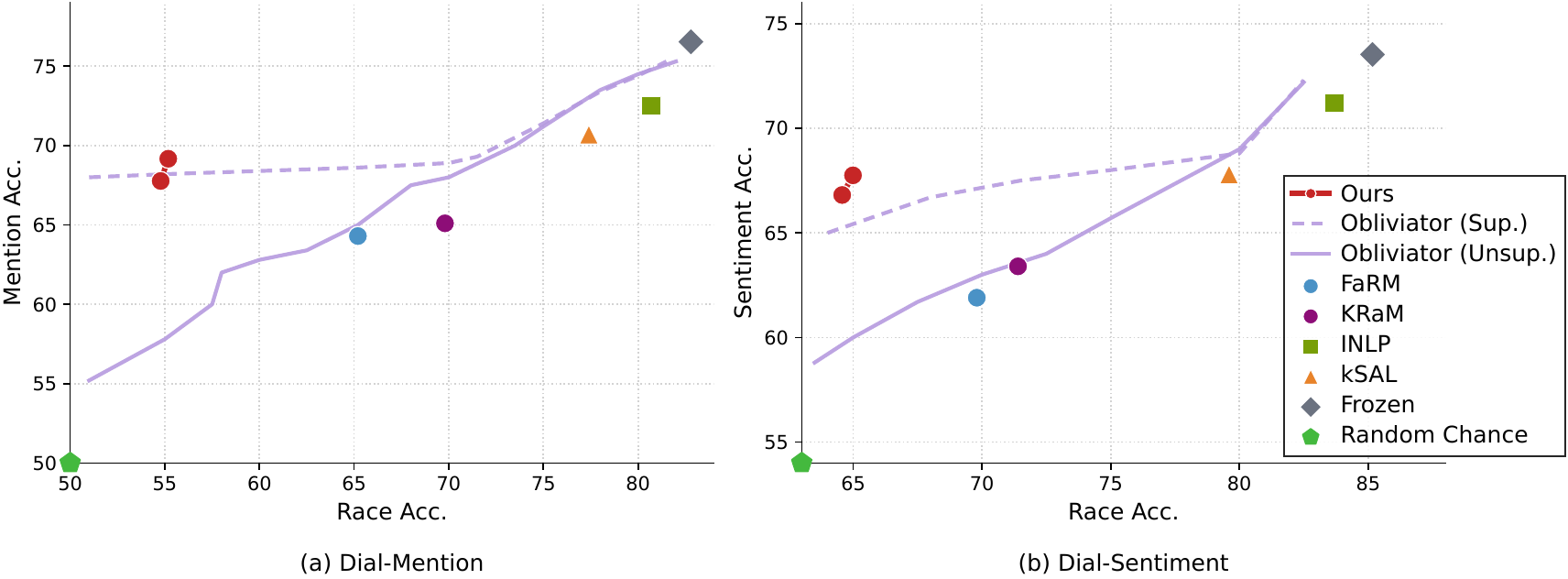}
    \vspace{-0.75em}
\caption{Concept erasure on \textsc{Dial-Mention} and \textsc{Dial-Sentiment} using BERT representations.}
    \label{fig:dial_bert}
     \vspace{-1.0em}
\end{figure}
\paragraph{Dial datasets.}
Figure~\ref{fig:dial_bert} evaluates erasure on Dial. Across both tasks, our
method substantially outperforms unsupervised Obliviator at comparable race
leakage. On \textsc{Dial-Mention}, our unsupervised method matches supervised
Obliviator's utility--erasure trade-off, while on \textsc{Dial-Sentiment} it
achieves higher downstream utility in the low-leakage regime, despite using no
downstream labels. Appendix~\ref{sec:fariness} further shows that our method's advantage extends beyond erasure, achieving a consistently stronger fairness--utility trade-off on \textsc{Dial-Sentiment} under both DP and GAP$_{\mathrm{RMS}}$.
Additional probes and an exact-HSIC permutation test are reported
in Appendices~\ref{app:more_probes} and~\ref{app:permutation}.

\subsection{Generative Steering}
\label{sec:steering}

We evaluate whether the steering capability induced by our erasure model
transfers to generation, using toxicity steering on Qwen2.5-7B~\citep{qwen2.5}
with \textsc{RealToxicityPrompts}~\citep{gehman2020realtoxicityprompts}. The autoencoder is trained on final-layer states supplied to the frozen LM head,
using toxicity labels but no generation-specific objective.
\begin{wraptable}{r}{0.54\columnwidth}
\vspace{-1.7\baselineskip}
\centering
\footnotesize
\setlength{\tabcolsep}{2.5pt}
\caption{Methods are compared
at matched topicality; lower toxicity and PPL are better.}
\label{tab:toxicity_steering}
\begin{tabular}{lccc}
\toprule
\textbf{Intervention} &
\textbf{Tox. $\downarrow$} &
\textbf{PPL $\downarrow$} &
\textbf{Topic.} \\
\midrule
Untouched & 0.163 & 3.44 & 0.387\\
\midrule
\multicolumn{4}{l}{\emph{Topicality $\approx 0.35$}}\\
Mean matching, $\alpha=1.5$ & 0.095 & \textbf{3.74} & 0.358\\
ActAdd, $\alpha=1$        & 0.076 & 4.03 & 0.335\\
MiMiC, $\alpha=0.5$       & 0.078 & 3.94 & 0.351\\
Ours, $\beta=1$           & \textbf{0.074} & 3.81 & 0.352\\
\midrule
\multicolumn{4}{l}{\emph{Topicality $\approx 0.30$}}\\
ActAdd, $\alpha=1.5$      & 0.047 & 4.98 & 0.301\\
MiMiC, $\alpha=0.9$       & \textbf{0.034} & 5.28 & 0.302\\
Ours, $\beta=2$           & 0.037 & \textbf{4.77} & 0.303\\
\midrule
\multicolumn{4}{l}{\emph{Topicality $\approx 0.26$}}\\
ActAdd, $\alpha=2$        & 0.029 & \textbf{6.60} & 0.264\\
MiMiC, $\alpha=1.25$      & \textbf{0.018} & 7.66 & 0.266\\
Ours, $\beta=3$           & \textbf{0.018} & 6.86 & 0.263\\
\bottomrule
\end{tabular}
\vspace{-1.6em}
\end{wraptable}
Our framework naturally converts a representation through
$d(e(h),c')$. During autoregressive generation, however, repeatedly replacing
the complete hidden state can accumulate decoder reconstruction error.
We therefore use
\(
    h'
    =
    h
    +
    \beta
    \left[
        d(e(h),c')
        -
        d(e(h),c)
    \right].
    \label{eq:residual_steering}
\)
This retains the model's original hidden state while applying the nonlinear
concept transformation learned by the decoder. Under perfect reconstruction and $\beta=1$, this reduces to the original
conversion. For detoxification, every state is steered from toxic to non-toxic,
requiring no state-level toxicity prediction at inference time.

Since stronger steering trades detoxification for generation quality, we sweep the strength of every method
and compare operating points at matched topicality. We evaluate continuation
toxicity together with perplexity under the untouched model and
prompt--continuation similarity.

Table~\ref{tab:toxicity_steering} shows a strong detoxification--quality trade-off.
Around topicality $0.35$, ours achieves the lowest toxicity while also improving
fluency over activation addition and MiMiC. Around topicality $0.30$, it remains
close to MiMiC in toxicity ($0.037$ vs.\ $0.034$) while substantially improving
PPL ($4.77$ vs.\ $5.28$). At the strongest operating point, ours matches MiMiC's
toxicity ($0.018$) with markedly lower PPL ($6.86$ vs.\ $7.66$). Thus, the nonlinear steering mechanism learned by our erasure model transfers
effectively to autoregressive generation, without retraining the LM or learning
a separate steering model.
Appendix~\ref{app:steering_setup} gives the full setup and
Appendix~\ref{sec:add_results} the full results, including example
generations, autoregressive gender steering on Llama-2-7B, and frozen-probe
steering across PLMs.

\section{Conclusion}
We introduced a conditional autoencoder framework for concept erasure that combines
reconstruction-based preservation with statistical independence from the sensitive
concept. Under the stated ideal conditions, perfect reconstruction and population
independence are sufficient for nonlinear guardedness and exact steering, while
imperfect reconstruction admits quantitative preservation guarantees. Empirically,
our method achieves stronger utility--erasure trade-offs across datasets and language
models, while also enabling effective nonlinear steering in representation-level and
generative settings. Overall, reconstruction plus independence provides a simple and
practical framework for nonlinear concept erasure and steering.

\clearpage

\section*{AI use statement}
We used generative AI tools to assist with drafting and polishing parts of the manuscript. In particular, they were used to prepare initial versions of some passages and to improve wording, clarity, and organization. All AI-assisted text was subsequently reviewed, edited, and approved by the authors.

We did not use generative AI tools for literature retrieval or discovery, research ideation or execution, generating synthetic datasets, or proving mathematical claims. Technical claims, equations, experimental results, and citations were checked by the authors against our own analysis and the relevant sources.

We take full responsibility for the final content of the paper, including all text and other material.

\section*{Ethics statement}
All real-data experiments use publicly released datasets: BIAS IN BIOS, the dialectal Twitter
corpus behind the DIAL benchmarks, and REALTOXICITYPROMPTS. Appendix \ref{app:robustness} additionally
contains a fully synthetic simulation. We collected no new human-subject data and ran no
studies with human participants. The race labels in the DIAL benchmarks are dialect-based
proxies rather than self-reported attributes, and we use them only as the concept to erase.

Concept erasure can keep sensitive attributes such as gender or race from influencing downstream
decisions, but it is not a fairness guarantee: an attribute can be entangled with task-relevant content,
and too little erasure can give a false sense of protection. The same decoder that removes a concept
can also steer toward one, including toward undesirable behavior. In our toxicity experiments,
we steer only away from toxicity; we additionally study controlled gender steering on BIAS IN BIOS.
The toxicity appendix contains model-generated continuations of offensive prompts; we selected
examples without slurs or explicit content.

\section*{Reproducibility statement}
Proofs of all theoretical results are in Appendix~\ref{app:proofs}, and the
synthetic setting used to test them when independence is violated is in
Appendix~\ref{app:robustness}. The datasets and their splits are described in
Appendix~\ref{app:datasets}. The architecture, objective, HSIC estimator,
evaluation protocol and probes are in Appendix~\ref{sec:exp_setup}, and the
hyperparameters behind the results are in
Tables~\ref{tab:hparams} and~\ref{tab:figure_hyperparams}. The generative
experiments, including the intervention point, decoding and scoring,
are specified in Appendix~\ref{app:steering_setup}, and the metrics are
defined in Appendices~\ref{app:steering_setup} and~\ref{app:metrics}.

\clearpage

\bibliography{paper}
\bibliographystyle{plainnat}

\appendix
\appendix
\clearpage

\section{Proofs}
\label{app:proofs}
\subsection{Why reconstruction targets mutual information and HSIC does not}
\label{app:mi_vs_hsic}

We formalize the motivation behind the objective in
\eqref{eq:ideal_mi_objective}. We show two facts:
(i) under the independence constraint \(Z \perp\!\!\!\perp C\), maximizing \(I(X;Z)\) with respect to $Z$ is
equivalent to minimizing \(H(X \mid Z,C)\), which justifies reconstruction as the preservation
objective; and
(ii) Mutual information is not a monotonic function of HSIC, so maximizing HSIC is not, in
General, the correct way to optimize information preservation.

\begin{proposition}\label{prop_2}
Let \(Z=e(X)\) be a deterministic representation, and suppose that \(Z \perp\!\!\!\perp C\).
Then:
\begin{equation}
I(X;Z) = H(X \mid C) - H(X \mid Z,C).
\label{eq:appendix_mi_identity}
\end{equation}
\end{proposition}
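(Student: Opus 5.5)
Since $Z=e(X)$ is a deterministic function of $X$, I would work with discrete Shannon entropies, as in the discrete setting of Section~5, and write $I(X;Z)$ in two ways. The plan is a short chain of chain-rule identities. The first step uses determinism: $H(Z\mid X)=0$, so
\[
I(X;Z)=H(Z)-H(Z\mid X)=H(Z).
\]
The second step uses the hypothesis $Z \perp\!\!\!\perp C$, which gives $H(Z)=H(Z\mid C)$. Therefore $I(X;Z)=H(Z\mid C)$.

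Next I would expand $H(X,Z\mid C)$ by the chain rule in both orders:
\[
H(X\mid C)+H(Z\mid X,C)=H(X,Z\mid C)=H(Z\mid C)+H(X\mid Z,C).
\]
Since $Z$ is a function of $X$, it is also a function of $(X,C)$, so $H(Z\mid X,C)=0$. The identity therefore reduces to
\[
H(Z\mid C)=H(X\mid C)-H(X\mid Z,C).
\]
Combining this with the previous step gives \eqref{eq:appendix_mi_identity}.

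Equivalently, one can write $H(Z\mid C)=I(X;Z\mid C)$, using $H(Z\mid X,C)=0$, and then observe that $I(X;Z)=I(X;Z\mid C)$. This equality holds because
\[
I(X;Z)-I(X;Z\mid C)=I(Z;C)-I(Z;C\mid X),
\]
where $I(Z;C)=0$ by independence and $I(Z;C\mid X)=0$ because $Z$ is determined by $X$. Either route works; I would present the first because it is more transparent.

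I do not expect a real obstacle. The only point needing care is the role of the two hypotheses. Determinism of $e$ is used twice: to eliminate $H(Z\mid X)$ and to eliminate $H(Z\mid X,C)$. Independence is used exactly once, to replace $H(Z)$ by $H(Z\mid C)$. In the continuous case the same steps would break down, since differential entropy of a deterministic function is degenerate, which is why the statement is restricted to the discrete setting. I would remark at the end that $H(X\mid C)$ does not depend on $e$. Hence, under the constraint $Z\perp\!\!\!\perp C$, maximizing $I(X;Z)$ is equivalent to minimizing $H(X\mid Z,C)$, which is the motivation stated in Section~\ref{sec:motivation}.
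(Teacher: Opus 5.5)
Your proof is correct and is essentially the paper's argument. The paper uses determinism to write $I(X;Z)=H(Z)=I((X,C);Z)$, then applies the chain rule $I((X,C);Z)=I(C;Z)+I(X;Z\mid C)$ with $I(C;Z)=0$, and finally expands $I(X;Z\mid C)=H(X\mid C)-H(X\mid Z,C)$. This is the same bookkeeping as your $H(Z)=H(Z\mid C)$ combined with the two-way expansion of $H(X,Z\mid C)$, and it matches your alternative route through $I(X;Z)=I(X;Z\mid C)$ exactly.
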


\begin{proof}
Since \(Z=e(X)\) is a deterministic function of \(X\), we have:
\[
H(Z \mid X)=0
\qquad\text{and thus also }\qquad
H(Z \mid (X,C))=0.
\]
Hence:
\[
I(X;Z)=H(Z)-H(Z \mid X)=H(Z)=H(Z)-H(Z \mid (X,C))=I((X,C);Z).
\]
Applying the chain rule gives
\[
I((X,C);Z)=I(C;Z)+I(X;Z \mid C). 
\]
Because \(Z \perp\!\!\!\perp C\), we have \(I(C;Z)=0\), and therefore:
\begin{equation}
I(X;Z)=I(X;Z \mid C). \label{eq:appendix_mi_identity1}
\end{equation}
Using the standard identity for conditional mutual information,
\[
I(X;Z \mid C)=H(X \mid C)-H(X \mid Z,C),
\]
we obtain \eqref{eq:appendix_mi_identity}.
\end{proof}

\begin{corollary}
Under the constraint \(Z \perp\!\!\!\perp C\), maximizing \(I(X;Z)\) over the choice of $Z$ is equivalent to minimizing
\(H(X \mid Z,C)\). In particular, if there exists a decoder \(d\) such that:
\[
X = d(Z,C) \qquad \text{a.s.},
\]
then:
\[
H(X \mid Z,C)=0
\qquad\text{and}\qquad
I(X;Z)=H(X \mid C),
\]
which is the largest possible value of \(I(X;Z)\) among all \(C\)-independent deterministic
representations of \(X\).
\end{corollary}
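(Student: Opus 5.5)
The corollary has two parts, and I will prove them separately, both resting on Proposition~\ref{prop_2}. The first part says that maximizing \(I(X;Z)\) is equivalent to minimizing \(H(X \mid Z,C)\). Proposition~\ref{prop_2} gives, for every deterministic \(C\)-independent representation \(Z=e(X)\),
\[
I(X;Z)=H(X\mid C)-H(X\mid Z,C).
\]
The term \(H(X\mid C)\) depends only on the joint law of \((X,C)\), which is fixed by the data-generating process and not by the choice of \(e\). So over the constrained class, \(I(X;Z)\) equals a constant minus \(H(X\mid Z,C)\), and the two optimization problems share the same optimizers.

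The second part starts from the decoder assumption \(X=d(Z,C)\) a.s. In the discrete setting this makes \(X\) almost surely a deterministic function of \((Z,C)\), so \(H(X\mid Z,C)=0\). I would make this explicit: conditioned on each atom \((Z,C)=(z,c)\) of positive probability, \(X\) equals \(d(z,c)\) with probability one, so each conditional entropy vanishes and so does their average. Substituting into Proposition~\ref{prop_2} gives \(I(X;Z)=H(X\mid C)\).

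For maximality, take any other deterministic representation \(Z'=e'(X)\) with \(Z'\perp\!\!\!\perp C\). Proposition~\ref{prop_2} applies to it as well, and since conditional entropy of a discrete variable is nonnegative,
\[
I(X;Z')=H(X\mid C)-H(X\mid Z',C)\le H(X\mid C)=I(X;Z).
\]
Hence \(H(X\mid C)\) is an upper bound over the whole class, and \(Z\) attains it.

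The only point needing care is the hypothesis on \(Z'\). It must be both deterministic in \(X\) and independent of \(C\), because these are exactly the assumptions Proposition~\ref{prop_2} uses. The discreteness assumption is what guarantees \(H(X\mid Z',C)\ge 0\); for differential entropy this could fail, which is why the statement is confined to the discrete setting. Otherwise the argument is routine substitution.
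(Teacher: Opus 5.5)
Your proposal is correct and follows the paper's proof essentially step for step: the first claim comes from Proposition~\ref{prop_2} with \(H(X\mid C)\) fixed by the data-generating law, then perfect reconstruction gives \(H(X\mid Z,C)=0\), and maximality follows from nonnegativity of discrete conditional entropy. Your explicit application of Proposition~\ref{prop_2} to an arbitrary competitor \(Z'=e'(X)\) is slightly cleaner than the paper's version of this corollary, which writes the bound \(I(X;Z)\le H(X\mid C)\) only for \(Z\) itself and leaves the extension to other representations implicit; the paper makes that step explicit only in the proof of Corollary~\ref{cor:optimality_c_independent}.
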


\begin{proof}
The first claim follows immediately from Proposition~\ref{prop_2}, since \(H(X \mid C)\) is fixed by the
data-generating distribution and does not depend on the encoder. For the second claim, if
\(X=d(Z,C)\) almost surely (perfect reconstruction), then \(X\) is a deterministic function of \((Z,C)\), and therefore
\(H(X \mid Z,C)=0\). Substituting into \eqref{eq:appendix_mi_identity} yields:
\[
I(X;Z)=H(X \mid C).
\]
Finally, from ~\eqref{eq:appendix_mi_identity1} and the definition of conditional mutual information
\[
I(X;Z) = I(X;Z|C) = H(X|C) - H(X|Z,C) \le H(X|C),
\]
so this value is maximal.

\end{proof}

\begin{proposition}
\label{prop:counter_example}
In general, mutual information is not a monotonic function of $\hsic$. In particular, for fixed
Gaussian kernels there does not exist a universal monotone function $\phi$ such that
\[
I(X;Z)=\phi(\hsic(X,Z))
\]
for all joint distributions of $(X,Z)$.
\end{proposition}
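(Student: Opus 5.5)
The plan is to exhibit a counterexample: two joint distributions of $(X,Z)$ whose orderings under HSIC and under mutual information are reversed. If a universal monotone $\phi$ existed, then $\hsic(X,Z)\le \hsic(X',Z')$ would force $I(X;Z)\le I(X';Z')$ (and equal HSIC would force equal MI). So it suffices to find two distributions with $\hsic_1<\hsic_2$ but $I_1>I_2$, or with equal HSIC but different MI. Throughout I fix the Gaussian kernels $k(x,x')=\exp(-\|x-x'\|^2/2\sigma^2)$ on $X$ and on $Z$ with fixed bandwidths.

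The first construction to try exploits the reparameterization invariance of MI mentioned in Section~\ref{sec:motivation}. Let $X$ be uniform on $\{0,1\}$ and set $Z=aX$ for a scale $a>0$. For every $a>0$ the map is injective, so $I(X;Z)=H(X)=\log 2$ regardless of $a$.

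HSIC, by contrast, varies with $a$. For binary variables, the population HSIC is
\[
\hsic(X,Z)=\|\mu_{P_{XZ}}-\mu_{P_X\otimes P_Z}\|^2 .
\]
With two-point marginals this reduces to a product of the form
\[
c\,(1-k_X(0,1))\,(1-k_Z(0,a)),
\]
where $c>0$ depends only on the joint probability table. It is cleanest to verify this directly via the expansion $\hsic=\mathbb E[kl]-2\mathbb E[k\,\mathbb E l]+\mathbb E k\,\mathbb E l$. Since $k_Z(0,a)=e^{-a^2/2\sigma^2}$ is strictly decreasing in $a$, HSIC is strictly increasing in $a$ while MI stays constant. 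Two different values of $a$ then give distinct HSIC values with identical MI.

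Distinct HSIC with equal MI does not by itself contradict a non-strictly monotone $\phi$; a constant $\phi$ would be consistent with it. To close that gap I would add a second family whose MI changes while HSIC stays fixed. Take a noisy binary channel: $Z=aX$ with probability $1-p$ and $Z=a(1-X)$ with probability $p$. Its MI is $\log 2-h(p)$, and its HSIC is
\[
c(p)\,(1-k_X(0,1))\,(1-k_Z(0,a)),
\]
where $c(p)$ is proportional to $(1-2p)^2$.

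Now choose $p$ and $a'$ so that $(1-2p)^2\,(1-e^{-a'^2/2\sigma^2})$ equals the HSIC of the noiseless case at some smaller scale $a$. Such a choice exists because HSIC is continuous in $a$ and approaches $c(p)(1-k_X(0,1))$ as $a'\to\infty$. The two distributions then have equal HSIC but MI values $\log 2$ and $\log 2-h(p)<\log 2$. This is incompatible with $I=\phi(\hsic)$ for any function $\phi$, monotone or not, which is stronger than the claim. The main obstacle is carrying out the HSIC computation for these two-point distributions correctly and checking that the matching equation has a solution with $p\in(0,1/2)$. The intermediate value theorem applied in $a'$ handles the latter, provided $p$ is chosen small enough that the supremum of the noisy-channel HSIC exceeds the target value.
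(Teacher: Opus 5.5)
Your proof is correct and uses the same mechanism as the paper's. Mutual information is invariant under rescaling $Z \mapsto aZ$, while HSIC varies continuously with $a$, so the intermediate value theorem gives two distributions with equal HSIC but different mutual information, which rules out any function $\phi$, monotone or not. The paper does this for arbitrary dependent pairs: it shrinks the scale to $0$, uses dominated convergence to show HSIC tends to $0$, and uses positivity of HSIC for dependent pairs. You instead use binary channels, where the closed form $\tfrac{(1-2p)^2}{4}\bigl(1-k_X(0,1)\bigr)\bigl(1-k_Z(0,a)\bigr)$ makes the matching explicit and stays within the paper's discrete Shannon-entropy setting.
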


\begin{proof}
We prove the stronger statement that, for fixed Gaussian kernels, there does not exist
\emph{any} universal function $\phi$ such that:
\[
I(X;Z)=\phi(\hsic(X,Z))
\]
for all joint distributions of $(X,Z)$.

Fix Gaussian kernels:
\[
k_\tau(x,x')=\exp\!\left(-\frac{\|x-x'\|_2^2}{\tau^2}\right),
\qquad
\ell_\sigma(z,z')=\exp\!\left(-\frac{\|z-z'\|_2^2}{\sigma^2}\right).
\]
Now choose two dependent pairs $(X_1,Z_1)$ and $(X_2,Z_2)$ such that:
\[
I(X_1;Z_1)\neq I(X_2;Z_2).
\]
For example, one may take two jointly Gaussian pairs with different correlations.

For $j\in\{1,2\}$ and $a>0$, define:
\[
T_a(z)=az,
\qquad
Z_{j,a}:=T_a(Z_j)=aZ_j.
\]
Since $T_a$ is bijective for every $a>0$,
\[
I(X_j;Z_{j,a})=I(X_j;Z_j)
\qquad\text{for all }a>0.
\]

We now study how $\hsic$ changes with $a$. Let $(X_j',Z_j')$ and $(X_j'',Z_j'')$ be i.i.d.\ copies
of $(X_j,Z_j)$, independent of each other and of $(X_j,Z_j)$. Writing $\hsic$ in population form,
\begin{align}
\hsic(X_j,Z_{j,a})
&=
\mathbb{E}\!\left[k_\tau(X_j,X_j')\,\ell_\sigma(aZ_j,aZ_j')\right] \nonumber\\
&+\mathbb{E}\!\left[k_\tau(X_j,X_j')\right]\,
 \mathbb{E}\!\left[\ell_\sigma(aZ_j,aZ_j')\right] \nonumber\\
&-2\,\mathbb{E}\!\left[k_\tau(X_j,X_j')\,\ell_\sigma(aZ_j,aZ_j'')\right].
\label{eq:hsic-population-scaled}
\end{align}

For every fixed $z,z'$,
\[
\ell_\sigma(az,az')
=
\exp\!\left(-\frac{a^2\|z-z'\|_2^2}{\sigma^2}\right)
\to 1
\qquad\text{as }a\to 0.
\]
Moreover,
\[
0\le \ell_\sigma(az,az')\le 1,
\qquad
0\le k_\tau(x,x')\le 1.
\]
Hence dominated convergence applies to \eqref{eq:hsic-population-scaled} and
\begin{align*}
\lim_{a\rightarrow 0}\hsic(X_j,Z_{j,a})
&=
\mathbb{E}[k_\tau(X_j,X_j')]
+\mathbb{E}[k_\tau(X_j,X_j')]
-2\mathbb{E}[k_\tau(X_j,X_j')] =0 
\end{align*}

For each fixed $j$, define:
\[
F_j(a):=\hsic(X_j,Z_{j,a}), \qquad a\in[0,1].
\]
Clearly, $F_j$ is continuous on $[0,1]$. 

At $a=1$ we recover the original pair $\hsic(X_j,Z_j)$. Since $\hsic$ is a maximum mean discrepancy,
\[
\hsic(X_j,Z_j)=0
\quad\Longleftrightarrow\quad
X_j\perp\!\!\!\perp Z_j.
\]
Because each $(X_j,Z_j)$ was chosen dependent, it follows that $\hsic(X_j,Z_j)>0.$

Since $\lim_{a\rightarrow 0}F_j(a)= 0$ and $F_j(1)=\hsic(X_j,Z_j)>0$, continuity implies that
$F_j$ takes every value in the interval $(0,\hsic(X_j,Z_j)]$.

Choose any
\[
h\in \bigl(0,\min\{\hsic(X_1,Z_1),\hsic(X_2,Z_2)\}\bigr).
\]
For each $j\in\{1,2\}$, we have:
\[
0<h<\min\{\hsic(X_1,Z_1),\hsic(X_2,Z_2)\}\le \hsic(X_j,Z_j)=F_j(1).
\]
Since  $F_j$ takes every value in the interval $(0,\hsic(X_j,Z_j)],$
there exists $a_j\in(0,1]$ such that $\hsic(X_j,Z_{j,a_j}) = F_j(a_j)=h$.
As mutual information is invariant under the invertible scaling $z\mapsto a_j z$,
\[
I(X_j;Z_{j,a_j})=I(X_j;Z_j), \qquad j\in\{1,2\}.
\]
Therefore,
\[
I(X_1;Z_{1,a_1})=I(X_1;Z_1)\neq I(X_2;Z_2)=I(X_2;Z_{2,a_2}).
\]
However:
\[
\hsic(X_1,Z_{1,a_1})=\hsic(X_2,Z_{2,a_2})=h.
\]
Hence we have constructed two joint distributions with the same $\hsic$ value but different mutual
information values. Therefore no universal function $\phi$ can satisfy:
\[
I(X;Z)=\phi(\hsic(X,Z))
\]
for all joint distributions. In particular, no universal monotone such function exists.

\end{proof}

\paragraph{Consequence for the preservation objective.}
Proposition~\ref{prop:counter_example} shows that maximizing \(\mathrm{HSIC}(X,Z)\) is, in general, a kernel-dependent
heuristic rather than the information-theoretically correct preservation objective. This is precisely
why we use reconstruction, equivalently, minimization of \(H(X \mid Z,C)\) under
\(Z \perp\!\!\!\perp C\), instead of an HSIC-based preservation term.

\subsection{Proofs for the Perfect Regime}
\label{app:perfect_regime_proofs}

\begin{proof}[Proof of Theorem~\ref{thm:perfect_preservation}]
Under \(Z \perp\!\!\!\perp C\), Proposition~\ref{prop_2} gives:
\[
I(X;Z) = H(X \mid C) - H(X \mid Z, C).
\]
Since $\hat X = d(Z,C) = X$ almost surely, \(X\) is a deterministic function of \((Z,C)\), and therefore $H(X \mid Z, C) = 0$.
Hence $I(X;Z) = H(X \mid C)$.

It remains to identify \(H(X \mid C)\). Since \(f\) is injective on
\(\mathcal U \times \mathcal C\), for every fixed \(c \in \mathcal C\)
the map \(f_c(u) := f(u,c)\) is a bijection from \(\operatorname{supp}(U)\)
onto \(\operatorname{supp}(X \mid C = c)\), so $H(X \mid C = c) = H(U \mid C = c)$.
Since \(U \perp\!\!\!\perp C\), we obtain $H(U \mid C) = H(U)$,
and therefore $H(X \mid C) = H(U)$.

\end{proof}

\begin{proof}[Proof of Corollary~\ref{cor:optimality_c_independent}]
Let \(Z' = e'(X)\) be any deterministic representation satisfying \(I(C;Z')=0\).
Since \(Z'\) is a deterministic function of \(X\) and is independent of \(C\), Proposition~\ref{prop_2} applies to \(Z'\), yields
\[
I(X;Z') = H(X\mid C) - H(X\mid Z',C).
\]
As conditional entropy is nonnegative, we have $I(X;Z') \le H(X\mid C)$.
On the other hand, by Theorem~\ref{thm:perfect_preservation}, $I(X;Z)=H(X\mid C)$.
Combining the two displays gives $I(X;Z') \le I(X;Z)$
as claimed.

\end{proof}

\begin{proof}[Proof of Corollary~\ref{cor:perfect_guardedness}]
The conditions in Definition~\ref{def:perfect_guardedness}
follow directly from Corollary~\ref{cor:optimality_c_independent}.

\end{proof}

\begin{corollary}[Deterministic recovery of the complementary factor]
\label{cor:recover_u}
Under the assumptions of Theorem~\ref{thm:perfect_preservation},
\[
H(U\mid Z,C)=0.
\]
\end{corollary}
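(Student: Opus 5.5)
The plan is to show that, once perfect reconstruction holds, $U$ is a deterministic function of $(Z,C)$ almost surely; then $H(U\mid Z,C)=0$ because conditional entropy vanishes for functions of the conditioning variables. Independence $Z \perp\!\!\!\perp C$ is not needed for this step, only reconstruction and the injectivity of $f$.

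\textbf{Step 1 (a left inverse from injectivity).} Since $f:\mathcal U\times\mathcal C\to\mathcal X$ is injective, for each fixed $c$ the map $f_c(u)=f(u,c)$ is injective, exactly as used in the proof of Theorem~\ref{thm:perfect_preservation}. So on $f(\mathcal U\times\mathcal C)$ there is a map $\tilde h$ with $\tilde h(f(u,c),c)=u$ for all $(u,c)$. In the discrete setting no measurability issue arises, and we set $\tilde h(x,c)$ arbitrarily (to a fixed element of $\mathcal U$) off the image. Note that Assumption~\ref{ass:lipschitz_left_inverse} is not needed here.

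\textbf{Step 2 (composing with the decoder).} By perfect reconstruction, $d(Z,C)=X=f(U,C)$ almost surely. Hence
\[
U=\tilde h(f(U,C),C)=\tilde h(d(Z,C),C)\quad\text{a.s.}
\]
This exhibits $U$ almost surely as the deterministic function $\psi(Z,C):=\tilde h(d(Z,C),C)$.

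\textbf{Step 3 (conclude).} In the discrete setting, for every $(z,c)$ with positive probability, the conditional law of $U$ given $(Z,C)=(z,c)$ is a point mass at $\psi(z,c)$. Therefore $H(U\mid Z=z,C=c)=0$, and averaging gives $H(U\mid Z,C)=0$.

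An alternative check of the same result uses entropies. From Theorem~\ref{thm:perfect_preservation} we have $H(X\mid Z,C)=0$, and $H(U\mid X,C)=0$ by injectivity. The chain rule then gives
\[
H(U\mid Z,C)\le H(U,X\mid Z,C)=H(X\mid Z,C)+H(U\mid X,Z,C)\le 0+H(U\mid X,C)=0.
\]
I would present this entropy version as the main proof, since it reuses the theorem directly.

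The only mildly delicate point is the null set where reconstruction can fail; it contributes nothing to the conditional entropy because it has probability zero.
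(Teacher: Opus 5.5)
Your proof is correct and follows the paper's argument. The paper likewise writes $U=\eta(Z,C)$ with $\eta(z,c)=\pi_U(h(d(z,c)))$, composing a global inverse of the injective $f$ with the decoder; this is your Step 2, with your per-$c$ left inverse $\tilde h$ playing the role of $\pi_U\circ h$. Your entropy chain-rule version restates the same composition ($X$ is determined by $(Z,C)$, and $U$ by $(X,C)$), and you are right that independence $Z\perp\!\!\!\perp C$ is never used, since $H(X\mid Z,C)=0$ follows from perfect reconstruction alone.
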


\begin{proof}
Since \(f\) is injective and \(X=f(U,C)\), there exists a measurable map $h:\operatorname{Im}(f)\to \mathcal U\times \mathcal C$
such that $h(X)=(U,C)$ almost surely.
Let \(\pi_U:\mathcal U\times\mathcal C\to\mathcal U\) be the projection onto the first coordinate, and define:
\[
\eta(z,c):=\pi_U(h(d(z,c))).
\]
Because \(d(Z,C)=X\) we have $\eta(Z,C)=\pi_U(h(d(Z,C)))=\pi_U(h(X))=U$. 
Thus \(U\) is a deterministic function of \((Z,C)\), and therefore $H(U\mid Z,C)=0$.

\end{proof}

Thus, perfect reconstruction implies that no complementary information is lost: once \(C\) is provided, \(U\) can be recovered exactly from \(Z\).

\begin{corollary}[Exact preservation of complementary information]
\label{}
Under the assumptions of Theorem~\ref{thm:perfect_preservation},
\[
I(U; Z \mid C) = H(U).
\]
\end{corollary}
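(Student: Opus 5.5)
The plan is to expand the conditional mutual information with the chain rule and use the two facts already established: Corollary~\ref{cor:recover_u} and the independence $U \perp\!\!\!\perp C$ from the data-generating model. Specifically, I will write
\[
I(U;Z\mid C) = H(U\mid C) - H(U\mid Z,C),
\]
which is the standard identity for conditional mutual information in the discrete setting, where all entropies are finite.

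The second term vanishes by Corollary~\ref{cor:recover_u}. Under perfect reconstruction $d(Z,C)=X$ a.s. and injectivity of $f$, $U$ is a deterministic function of $(Z,C)$, so $H(U\mid Z,C)=0$. Independence of $Z$ and $C$ is not even needed for this step; it enters only through the standing assumptions of Theorem~\ref{thm:perfect_preservation}.

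For the first term, $H(U\mid C)=H(U)$ because $U$ and $C$ are independent by the problem formulation. This is the same step used at the end of the proof of Theorem~\ref{thm:perfect_preservation}. Combining the two gives $I(U;Z\mid C)=H(U)$.

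As a consistency check, I may also verify the result through $X$. Conditional on $C=c$, the map $u\mapsto f(u,c)$ is a bijection, so $I(U;Z\mid C)=I(X;Z\mid C)$. By equation~\ref{eq:appendix_mi_identity1} this equals $I(X;Z)$, which Theorem~\ref{thm:perfect_preservation} shows equals $H(U)$. The only step needing any care is justifying $H(U\mid Z,C)=0$, and that is already supplied by Corollary~\ref{cor:recover_u}. There is no real obstacle beyond making sure the entropies are finite, which holds in the discrete setting of this section.
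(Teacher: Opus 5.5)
Your proposal is correct and matches the paper's proof: it uses the identity $I(U;Z\mid C)=H(U\mid C)-H(U\mid Z,C)$, kills the second term with Corollary~\ref{cor:recover_u}, and reduces the first to $H(U)$ via $U\perp\!\!\!\perp C$. Your side check through $I(X;Z\mid C)=I(X;Z)=H(U)$ is also valid. You are right that Corollary~\ref{cor:recover_u} itself needs only perfect reconstruction and injectivity of $f$, not $Z\perp\!\!\!\perp C$.
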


\begin{proof}
By Corollary~\ref{cor:recover_u}, $H(U \mid Z, C) = 0.$ 
Using \(U \perp\!\!\!\perp C\), we have \(H(U \mid C) = H(U)\), and therefore:
\[
I(U;Z \mid C) = H(U \mid C) - H(U \mid Z, C) = H(U). \qedhere
\]
\end{proof}

Combined with $H(U \mid Z, C) = 0$ from
Corollary~\ref{cor:recover_u}, the following corollary shows that $Z$ is related
to $U$ by an entropy-preserving transformation, recovering the
identifiability-up-to-entropy-preservation property.
\begin{corollary}[Entropy-preserving representation]
\label{cor:entropy_preserving}
Under the assumptions of Theorem~\ref{thm:perfect_preservation},
\[
H(Z) = H(U).
\]

\end{corollary}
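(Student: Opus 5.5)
The plan is to obtain $H(Z)=H(U)$ by combining the two identities already established under the assumptions of Theorem~\ref{thm:perfect_preservation}. Because $Z=e(X)$ is a deterministic function of $X$, we have $H(Z\mid X)=0$, and therefore
\[
I(X;Z)=H(Z)-H(Z\mid X)=H(Z).
\]
This identity was already noted in the proof of Proposition~\ref{prop_2}. Substituting Theorem~\ref{thm:perfect_preservation}, which gives $I(X;Z)=H(X\mid C)=H(U)$, yields $H(Z)=H(U)$ immediately.

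To make the ``entropy-preserving transformation'' reading explicit, as the sentence before the corollary suggests, I would add a cross-check through $U$. First, $Z=e(f(U,C))$ is a deterministic function of $(U,C)$, so $H(Z\mid U,C)=0$. Next, Corollary~\ref{cor:recover_u} gives $H(U\mid Z,C)=0$. Expanding $H(U,Z\mid C)$ in both orders then gives
\[
H(Z\mid C)=H(U\mid C).
\]
Finally, $Z\perp\!\!\!\perp C$ turns the left side into $H(Z)$, and $U\perp\!\!\!\perp C$ turns the right side into $H(U)$. This route shows that, for each fixed $c$, the map $u\mapsto e(f(u,c))$ is injective on the support of $U$.

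I do not expect a real obstacle, since everything is in the discrete setting where Shannon entropies are finite and these manipulations are valid. The only point needing care is the ``a.s.'' qualifier. I would handle it by noting that deterministic-function statements holding almost surely still give zero conditional entropy.
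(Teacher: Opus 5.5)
Your main argument is the same as the paper's proof: $H(Z\mid X)=0$ gives $H(Z)=I(X;Z)$, and Theorem~\ref{thm:perfect_preservation} supplies $I(X;Z)=H(U)$. Your cross-check, which expands $H(U,Z\mid C)$ in both orders using Corollary~\ref{cor:recover_u} and the two independences, is also valid, but the paper does not need it.
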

\begin{proof}
Since \(Z = e(X)\) is a deterministic function of \(X\), we have $H(Z \mid X) = 0$, and therefore $H(Z) = I(X;Z)$. 
By Theorem~\ref{thm:perfect_preservation}, \(I(X;Z) = H(U)\), so \(H(Z) = H(U)\).

\end{proof}

\subsection{Proofs for the Imperfect Regime}
\label{app:imperfect_regime}
\begin{proof}[Proof of Proposition~\ref{prop:mse_implies_preservation}]
By the defining property of \(h\) in Assumption~\ref{ass:lipschitz_left_inverse}, we have $U = h(f(U,C),C) = h(X,C)$.
By definition, $\hat U = h(d(Z,C),C)$.
Therefore $\|U-\hat U\|_2=\|h(X,C)-h(d(Z,C),C)\|_2$.
Applying the Lipschitz property of \(h\) with the same concept value \(C\) gives $\|U-\hat U\|_2 \le L_h\,\|X-d(Z,C)\|_2$. 
Squaring both sides yields $\|U-\hat U\|_2^2\le L_h^2\,\|X-d(Z,C)\|_2^2$.
Taking expectation, we obtain $\mathbb E\!\left[\|U-\hat U\|_2^2\right]\le L_h^2\,\mathbb E\!\left[\|X-d(Z,C)\|_2^2\right] \le L_h^2\,\varepsilon_{\mathrm{rec}}$,
as claimed.

\end{proof}

\begin{corollary}[Finite-alphabet preservation bound from MSE]
\label{cor:finite_alphabet_guardedness}
Assume the conditions of Proposition~\ref{prop:mse_implies_preservation}. In addition, assume that
the alphabet \(\mathcal U\) of \(U\) is finite with \(|\mathcal U|\ge 2\). Let $\Delta_U := \min_{u\neq u'} \|u-u'\|_2 > 0$. 
In addition, let $U^{\mathrm{dec}} := \arg\min_{u'\in \operatorname{supp}(U)}\|\hat U-u'\|_2$ be the nearest neighbor element of \(U\) from \(\hat U\), and define $\delta_{\mathrm{mse}}:=\min\left\{1-\frac{1}{|\mathcal U|},\,\frac{4L_h^2\varepsilon_{\mathrm{rec}}}{\Delta_U^2}\right\}$.
Then:
\[
\mathbb P\!\left[U^{\mathrm{dec}}\neq U\right]
\le
\frac{4L_h^2\varepsilon_{\mathrm{rec}}}{\Delta_U^2}.
\]
Consequently,
$I(U;Z\mid C)\ge H(U) - h_2(\delta_{\mathrm{mse}}) - \delta_{\mathrm{mse}}\log(|\mathcal U|-1)$,
where \(h_2(p):=-p\log p-(1-p)\log(1-p)\) is the binary entropy function.
\end{corollary}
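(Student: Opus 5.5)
The plan has two stages: first bound the decoding error probability via Proposition~\ref{prop:mse_implies_preservation} and Markov's inequality, then convert that error bound into an information bound via Fano's inequality.

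\textbf{Stage 1: error probability.} The key geometric observation is that nearest-neighbour decoding fails only if $\hat U$ lies at least $\Delta_U/2$ from $U$. Indeed, if $\|\hat U - U\|_2 < \Delta_U/2$, then for any $u' \neq U$ in $\operatorname{supp}(U)$ the triangle inequality gives
\[
\|\hat U - u'\|_2 \ge \|U-u'\|_2 - \|\hat U - U\|_2 > \Delta_U - \Delta_U/2 = \Delta_U/2 > \|\hat U - U\|_2 .
\]
So $U$ is the unique minimizer and $U^{\mathrm{dec}} = U$. (Ties in the $\arg\min$ occur only on the event $\|\hat U-U\|_2\ge \Delta_U/2$, so any fixed tie-breaking rule is harmless.) Hence
\[
\{U^{\mathrm{dec}}\neq U\}\subseteq\{\|U-\hat U\|_2^2\ge \Delta_U^2/4\}.
\]
Markov's inequality together with Proposition~\ref{prop:mse_implies_preservation} then gives
\[
\mathbb P[U^{\mathrm{dec}}\neq U]\le \frac{4\,\mathbb E\|U-\hat U\|_2^2}{\Delta_U^2}\le \frac{4L_h^2\varepsilon_{\mathrm{rec}}}{\Delta_U^2}.
\]

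\textbf{Stage 2: information bound.} I note that $U^{\mathrm{dec}}$ is a deterministic function of $(Z,C)$, since $\hat U = h(d(Z,C),C)$. This lets me apply Fano's inequality with estimator $U^{\mathrm{dec}}$ and $P_e := \mathbb P[U^{\mathrm{dec}}\neq U]$:
\[
H(U\mid Z,C)\le H(U\mid U^{\mathrm{dec}})\le h_2(P_e)+P_e\log(|\mathcal U|-1).
\]
The first inequality is the data-processing step, which holds because conditioning on the pair $(Z,C)$ reduces entropy at least as much as conditioning on a function of it.

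The main obstacle is replacing $P_e$ by $\delta_{\mathrm{mse}}$, because $h_2$ is not monotone. The function $\phi(p)=h_2(p)+p\log(|\mathcal U|-1)$ is concave on $[0,1]$, and its derivative $\log\frac{1-p}{p}+\log(|\mathcal U|-1)$ vanishes at $p=1-1/|\mathcal U|$. Therefore $\phi$ is nondecreasing on $[0,1-1/|\mathcal U|]$ and attains its maximum value $\log|\mathcal U|$ there. I handle two cases:
\begin{itemize}
\item If $\delta_{\mathrm{mse}} = 4L_h^2\varepsilon_{\mathrm{rec}}/\Delta_U^2 \le 1-1/|\mathcal U|$, then $P_e\le\delta_{\mathrm{mse}}$ lies in the increasing range, so monotonicity gives $\phi(P_e)\le\phi(\delta_{\mathrm{mse}})$.
\item Otherwise $\delta_{\mathrm{mse}}=1-1/|\mathcal U|$ and $\phi(\delta_{\mathrm{mse}})=\log|\mathcal U|\ge \phi(P_e)$ holds trivially.
\end{itemize}
In either case $H(U\mid Z,C)\le\phi(\delta_{\mathrm{mse}})$.

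\textbf{Conclusion.} Write $I(U;Z\mid C)=H(U\mid C)-H(U\mid Z,C)$. Using $U\perp\!\!\!\perp C$, we have $H(U\mid C)=H(U)$, so
\[
I(U;Z\mid C)\ge H(U)-h_2(\delta_{\mathrm{mse}})-\delta_{\mathrm{mse}}\log(|\mathcal U|-1),
\]
as claimed.
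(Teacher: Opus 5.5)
Your proposal is correct and follows essentially the same route as the paper: the $\Delta_U/2$ separation argument with Markov's inequality and Proposition~\ref{prop:mse_implies_preservation} for the error bound, then Fano's inequality with a monotonicity case split to replace $P_e$ by $\delta_{\mathrm{mse}}$, and $U\perp\!\!\!\perp C$ to conclude. The only differences are cosmetic and sound: you split cases on the value of $\delta_{\mathrm{mse}}$ where the paper splits on $P_e$, and you spell out the data-processing step $H(U\mid Z,C)\le H(U\mid U^{\mathrm{dec}})$ and the tie-breaking in the $\arg\min$, which the paper leaves implicit.
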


\begin{proof}
If \(U^{\mathrm{dec}}\neq U\), then some \(u'\neq U\) is at least as close to \(\hat U\) as \(U\) is.
Since any two distinct points in \(\operatorname{supp}(U)\) are separated by at least \(\Delta_U\), this is only possible if $\|\hat U-U\|_2 \ge \frac{\Delta_U}{2}$.
Hence:
\[
\mathbb P\!\left[U^{\mathrm{dec}}\neq U\right]
\le
\mathbb P\!\left[\|\hat U-U\|_2^2 \ge \frac{\Delta_U^2}{4}\right].
\]
Applying Markov's inequality to the right hand side gives:
\[
\mathbb P\!\left[U^{\mathrm{dec}}\neq U\right]
\le
\frac{4\,\mathbb E[\|\hat U-U\|_2^2]}{\Delta_U^2}.
\]
By Proposition~\ref{prop:mse_implies_preservation}, $\mathbb E[\|\hat U-U\|_2^2] \le L_h^2\varepsilon_{\mathrm{rec}}$,
and therefore $\mathbb P\!\left[U^{\mathrm{dec}}\neq U\right] \le \frac{4L_h^2\varepsilon_{\mathrm{rec}}}{\Delta_U^2}$.

Write $P_e:=\mathbb P\!\left[U^{\mathrm{dec}}\neq U\right]$ and
$f(p):=h_2(p)+p\log(|\mathcal U|-1)$. Since \(U\) is finite, Fano's inequality
gives \(H(U\mid Z,C)\le f(P_e)\). The function \(f\) increases on
\([0,\,1-1/|\mathcal U|]\) and reaches its maximum \(\log|\mathcal U|\) at
\(p=1-1/|\mathcal U|\). If \(P_e\le 1-1/|\mathcal U|\), then
\(P_e\le\delta_{\mathrm{mse}}\) and \(f(P_e)\le f(\delta_{\mathrm{mse}})\).
Otherwise the bound above forces
\(\delta_{\mathrm{mse}}=1-1/|\mathcal U|\), so
\(f(\delta_{\mathrm{mse}})=\log|\mathcal U|\ge H(U\mid Z,C)\). In both cases
\[
H(U\mid Z,C)
\le
h_2(\delta_{\mathrm{mse}})
+
\delta_{\mathrm{mse}}\log(|\mathcal U|-1).
\]
Finally, using \(U\perp\!\!\!\perp C\), we have \(H(U\mid C)=H(U)\), and therefore:
\[
I(U;Z\mid C)
=
H(U\mid C)-H(U\mid Z,C)
\ge
H(U)
-
h_2(\delta_{\mathrm{mse}})
-
\delta_{\mathrm{mse}}\log(|\mathcal U|-1),
\]
which proves the claim.
\end{proof}

\subsection{Proofs for Concept Conversion}
\label{app:conversion-proof}

\begin{proof}[Proof of Corollary~\ref{cor:conversion}]
By assumption,
\[
d(e(X), C) = X \ \text{a.s.}, \qquad Z = e(X) \perp\!\!\!\perp C.
\]
For each $c \in \mathrm{supp}(C)$, define:
\[
T_c(u) := e(f(u, c)).
\]
We first show that each $T_c$ is injective. Fix $c$ and suppose $T_c(u_1) = T_c(u_2)$. Then $d(T_c(u_1), c) = d(T_c(u_2), c)$, and by the definition of $T_c$ together with perfect reconstruction,
\[
d(T_c(u_i), c) = d(e(f(u_i, c)), c) = f(u_i, c), \qquad i = 1, 2.
\]
Hence $f(u_1, c) = f(u_2, c)$, and since $f$ is injective on $\mathcal{U} \times \mathcal{C}$, it follows that $u_1 = u_2$.

Since $U \perp\!\!\!\perp C$ in the structural model $X = f(U,C)$, the conditional
law of $U$ given $C = c$ coincides with its marginal: $P_{U \mid C=c} = P_U$.
Therefore, conditional on $\{C = c\}$,
\[
  Z \;=\; e(X) \;=\; e\bigl(f(U, c)\bigr) \;=\; T_c(U),
\]
and the conditional law of $Z$ given $\{C = c\}$ is the pushforward of $P_U$
under $T_c$:
\[
  P_{Z \mid C = c} \;=\; (T_c)_{\#}\, P_U.
\]
By the independence assumption $Z \perp\!\!\!\perp C$, the conditional law
$P_{Z \mid C = c}$ does not depend on $c$ and equals the marginal $P_Z$.
Hence, for any $c, c' \in \mathrm{supp}(C)$,
\[
  (T_c)_{\#}\, P_U
  \;=\; P_{Z \mid C = c}
  \;=\; P_Z
  \;=\; P_{Z \mid C = c'}
  \;=\; (T_{c'})_{\#}\, P_U.
\]
We now extract a single map from the family $\{T_c\}_{c \in \text{supp}(C)}$.
Fix arbitrary $c, c' \in \mathrm{supp}(C)$; we show $T_c = T_{c'}$.

In the discrete setting, the support of a pushforward measure is exactly the
image of the underlying map. Combined with the previous display, this gives
\[
  T_c\bigl(\mathrm{supp}(U)\bigr)
  \;=\; \mathrm{supp}\bigl((T_c)_{\#} P_U\bigr)
  \;=\; \mathrm{supp}\bigl((T_{c'})_{\#} P_U\bigr)
  \;=\; T_{c'}\bigl(\mathrm{supp}(U)\bigr)
  \;=:\; S,
\]
so $T_c$ and $T_{c'}$ are both bijections from $\mathrm{supp}(U)$ onto the
common image $S$, and $T_{c'}^{-1} : S \to \mathrm{supp}(U)$ is well-defined.
Define:
\[
  \pi_{c,c'} \;:=\; T_{c'}^{-1} \circ T_c,
\]
which, as a composition of bijections, is a bijection from $\mathrm{supp}(U)$
to itself. Pushing the equality $(T_c)_{\#} P_U = (T_{c'})_{\#} P_U$ forward
by $T_{c'}^{-1}$ yields:
\[
  (\pi_{c,c'})_{\#}\, P_U
  \;=\; (T_{c'}^{-1})_{\#}\, (T_c)_{\#}\, P_U
  \;=\; (T_{c'}^{-1})_{\#}\, (T_{c'})_{\#}\, P_U
  \;=\; P_U,
\]
where the last equality follows because \(T_{c'}^{-1}\circ T_{c'}\) is the identity on
\(\operatorname{supp}(U)\).
Thus \(\pi_{c,c'}\) is a measure-preserving automorphism of \(\operatorname{supp}(U)\).
By Assumption~\ref{ass:asym_u}, the only such automorphism is the identity,
hence $\pi_{c,c'} = \mathrm{id}_{\mathrm{supp}(U)}$, i.e.\ $T_c = T_{c'}$.
Since $c, c'$ were arbitrary, $T_c$ does not depend on $c$, and we may set
$T := T_c$ for any $c \in \mathrm{supp}(C)$. With this definition,
$T_c(u) = T(u)$ for every $u \in \mathrm{supp}(U)$ and every
$c \in \mathrm{supp}(C)$.

Combining $T(\mathrm{supp}(U)) = \mathrm{supp}(T_{\#} P_U)$ with
$T_{\#} P_U = P_Z$ shows that $T$ is a bijection from $\mathrm{supp}(U)$
onto $\mathrm{supp}(Z)$.

Finally, fix any target concept value $c' \in \mathrm{supp}(C)$. Since $e(f(U, C)) = T(U)$ a.s., we have:
\[
d(e(f(U, C)), c') = d(T(U), c').
\]
Applying perfect reconstruction to the sample $f(U, c')$ gives:
\[
d(e(f(U, c')), c') = f(U, c') \ \text{a.s.},
\]
and since $e(f(U, c')) = T(U)$, this becomes:
\[
d(T(U), c') = f(U, c') \ \text{a.s.}
\]
Combining the last two equations yields:
\[
d(e(f(U, C)), c') = f(U, c') \ \text{a.s.},
\]
which is the desired concept-conversion statement.
\end{proof}

\section{Empirical Robustness to Violated Independence}\label{app:robustness}

To probe how much our framework relies on \(U \perp\!\!\!\perp C\), we sweep a
controlled synthetic process \(X = f(U, C)\) in which the dependence between
\(U\) and \(C\) is governed by a single scalar \(\alpha \in [0, 1]\):
\(\alpha = 0\) recovers the paper's assumption, \(\alpha = 1\) makes \(C\) a
deterministic function of \(U\), and intermediate values interpolate. For
each \(\alpha\) we train the same model with the same loss (Eq.~2) and
evaluate erasure, utility, and steering.

\paragraph{Takeaway.}
Even when perfect independence does not hold, at reasonable levels of
dependence the framework continues to work well empirically. Erasure is
achieved at the chance floor for every \(\alpha\); the cost shows up as
a gradual loss of utility on the entangled coordinate of \(U\) only,
and at small and moderate dependence levels (e.g.\ \(\alpha \le 0.5\))
most of that utility is still retained. Steering is fully successful
from a downstream classifier's perspective at every \(\alpha\). In
short: when the deviation from independence is small, the downgrade in
performance is small.

\subsection{Setup}\label{app:robustness:setup}

\paragraph{Data.} The latent is \(U \in \mathbb{R}^{3}\) with
\(U \sim \mathcal{N}(0, I)\); we write \(U_{0}\) for its first
coordinate, \(U_{1}\) for its second, and \(U_{2}\) for its third.
Intuitively, the sensitive concept \(C\) is constructed to depend on
the first coordinate \(U_{0}\), with the magnitude of \(\alpha\)
controlling the strength of that dependence. Concretely, we add noise to \(U_{0}\) by an independent
\(\varepsilon \sim \mathcal{N}(0, 1)\) to form a continuous score:
\[
s \;=\; \alpha\, U_{0} \;+\; \sqrt{1 - \alpha^{2}}\,\varepsilon ,
\]
and assign \(C\) by binning \(s\) into the four equiprobable quantile
bins of its empirical distribution: we compute the 25th, 50th and 75th
percentiles \(q_{1}, q_{2}, q_{3}\) of \(s\), and set \(C = 0\) if
\(s < q_{1}\), \(C = 1\) if \(q_{1} \le s < q_{2}\), \(C = 2\) if
\(q_{2} \le s < q_{3}\), and \(C = 3\) if \(s \ge q_{3}\). The factor
\(\sqrt{1 - \alpha^{2}}\) keeps \(\mathrm{Var}(s) = 1\) for every
\(\alpha\), so the bin edges and class balance are identical across the
sweep; only the joint \(P(U, C)\) changes. By construction only \(U_{0}\)
is entangled with \(C\) (with strength controlled by \(\alpha\)), while
\(U_{1}, U_{2}\) are \(C\)-independent for every \(\alpha\). This split
lets us cleanly distinguish utility the model is forced to discard from
utility it should preserve.

\paragraph{Generative map \(f\).} A frozen, randomly initialized injective
map \(f : \mathbb{R}^{3} \times \{0,\dots,3\} \to \mathbb{R}^{16}\),
\[
f(U, C) \;=\; \big[\,U,\;\; \phi(C),\;\; \mathrm{MLP}([U, \phi(C)])\,\big],
\]
where \(\phi : \{0,\dots,K-1\} \to \mathbb{R}^{4}\) is a fixed class
embedding -- a lookup table whose \(K\) rows are sampled once from
\(\mathcal{N}(0, 1.5^{2})\) at construction and then held fixed (rows
distinct with probability one), so \(\phi(c)\) is just the \(c\)-th row of
this fixed matrix -- and \(\mathrm{MLP}\) is a two-hidden-layer Tanh
network. The skip-concatenation makes \(f\) provably
injective in \((U, C)\) and gives a 1-Lipschitz left-inverse
\(h(X, C) = X_{0:3}\), so Assumption~\ref{ass:lipschitz_left_inverse} holds with \(L_{h} = 1\). For each
\(\alpha\) we sample \(N_{\text{train}} = 6{,}000\) and
\(N_{\text{test}} = 2{,}000\) i.i.d.\ examples.

\paragraph{Model and training.} Encoder
\(e : \mathbb{R}^{16} \to \mathbb{R}^{4}\) (two hidden layers of width 64,
GELU); decoder \(d : \mathbb{R}^{4} \times \{0,\dots,3\} \to \mathbb{R}^{16}\)
(concept embedded into \(\mathbb{R}^{4}\), concatenated with \(Z\), two
hidden layers of width 64). Trained for 300 epochs with AdamW (learning rate
\(2 \times 10^{-3}\), batch 1024) using exactly Eq.~(2), with
\(\lambda_{\text{ind}} = 50\). HSIC is estimated with an RBF kernel on \(Z\)
(median-heuristic bandwidth, recomputed per minibatch) and a linear kernel
on the one-hot \(C\). We sweep
\(\alpha \in \{0, 0.10, 0.25, 0.35, 0.50, 0.60, 0.75, 0.85, 1\}\); a fresh
model and fresh probes are trained for each \(\alpha\).

\subsection{Metrics}\label{app:robustness:metrics}

\paragraph{Erasure.} An MLP probe (two hidden layers of width 64, GELU,
200 AdamW epochs) trained to predict \(C\). To establish the headroom we
report its accuracy on the frozen \(X\); to measure erasure we report the
accuracy of the same probe trained on \(Z\). Lower is better; chance is
\(1/K = 0.25\).

\paragraph{Utility.} An MLP regressor (same architecture, MSE loss) trained
to predict \(U\) from \(Z\). We report
\(R^{2} = 1 - \tfrac{\sum_i \|U_i - \hat U_i\|_{2}^{2}}{\sum_i \|U_i - \bar U\|_{2}^{2}}\),
the fraction of target variance explained (\(1\) is perfect, \(0\) is no
better than predicting the mean), separately for \(U_{0}\) (the
\(C\)-entangled coordinate) and \(U_{1:} = (U_{1}, U_{2})\) (the
\(C\)-independent coordinates). The split isolates utility that
\emph{must} be discarded under the constraint from utility that
\emph{should} be preserved.

\paragraph{Steering.} Given test \(x = f(U, C)\) and a target \(c'\), the
converted representation is \(x_{\text{conv}} = d(e(x), c')\). We measure
two things. \emph{(i) Oracle counterfactual MSE}:
\(\|x_{\text{conv}} - f(U, c')\|_{2}^{2}\), with the do-nothing baseline
\(\|x - f(U, c')\|_{2}^{2}\) for scale -- this is the strict point-wise
test, available only because we synthesized \(f\). \emph{(ii) Probe-based
success rate}, mirroring the steering evaluation in our main experiments:
a frozen-\(X\) classifier is applied to \(x_{\text{conv}}\) and we report
the rate at which it predicts \(c'\), with the rate of predicting \(c'\) on
the unmodified \(x\) as the trivial baseline.

\begin{figure}[t]
  \centering
  \includegraphics[width=\linewidth]{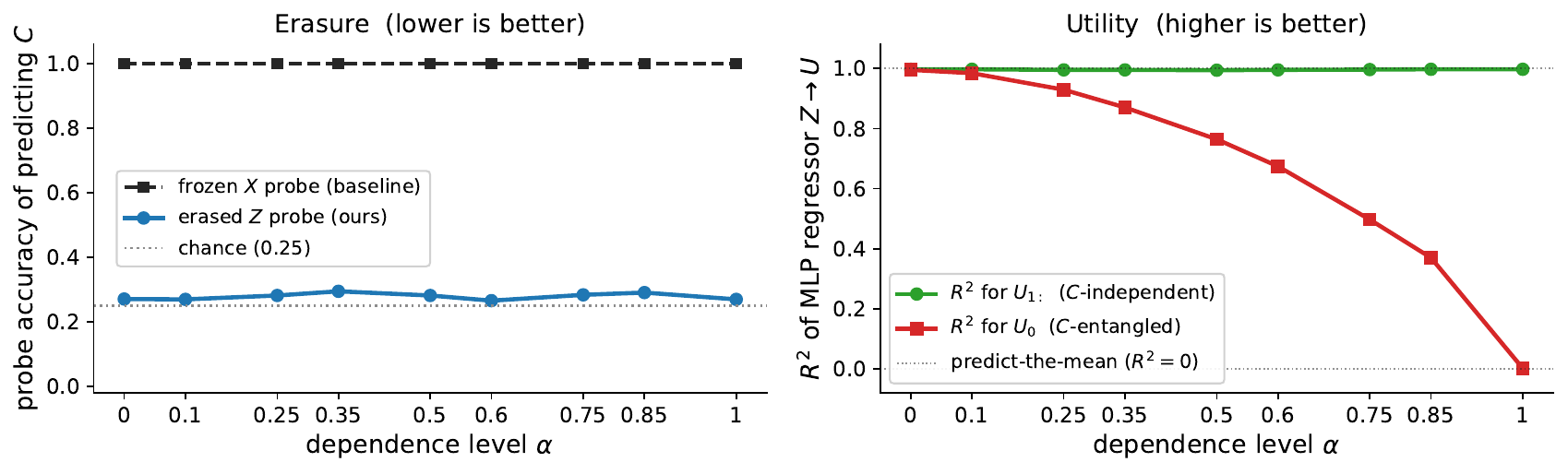}
  \caption{Erasure and utility under varying \(U\)--\(C\) dependence.
Left: concept leakage measured by an MLP probe predicting \(C\). The frozen
representation \(X\) exposes \(C\), while the erased representation \(Z\) stays
near chance across all dependence levels. Right: utility measured by
\(R^{2}\) for predicting \(U\) from \(Z\). The \(C\)-independent coordinates
\(U_{1:}=(U_{1},U_{2})\) remain preserved, while the \(C\)-entangled coordinate
\(U_{0}\) degrades gradually as \(\alpha\) increases.
}
  \label{fig:erasure_utility_combined_combined}
\end{figure}

\subsection{Erasure and Utility}\label{app:robustness:tradeoff}
Figure~\ref{fig:erasure_utility_combined_combined} summarizes the effect of
violating the independence assumption \(U \perp\!\!\!\perp C\). The key
observation is that the method remains robust across the sweep: the erased
representation \(Z\) stays essentially non-informative about \(C\), while utility
degrades in a controlled and interpretable way.

The left panel shows that concept leakage remains at chance level even as
\(\alpha\) increases and \(C\) becomes more strongly dependent on \(U_{0}\).
Thus, the independence penalty continues to enforce erasure even outside the
ideal theoretical regime. The right panel shows where the cost of this erasure
appears. The \(C\)-independent coordinates \(U_{1:}=(U_{1},U_{2})\) remain almost
perfectly recoverable from \(Z\) throughout the sweep, whereas the
\(C\)-entangled coordinate \(U_{0}\) degrades as its dependence on \(C\)
increases.

This is the desired robustness behavior. When \(\alpha\) is small or moderate,
the model still preserves substantial information about \(U_{0}\), so mild
violations of \(U \perp\!\!\!\perp C\) do not cause a collapse in utility.
Moreover, the transition is smooth: performance does not suddenly break once
the independence assumption is violated, but instead degrades progressively as
the statistical overlap between \(U_{0}\) and \(C\) increases. As \(\alpha\)
becomes large, information about \(U_{0}\) increasingly overlaps with
information about \(C\), so preserving all of \(U_{0}\) would necessarily
reintroduce concept leakage. The observed degradation is therefore gradual and
localized: the model gives up the information that conflicts with independence
at a rate that reflects the strength of the violation, while continuing to
preserve the complementary information that does not.

\subsection{Steering}\label{app:robustness:conversion}

Table~\ref{tab:robust:conversion} evaluates whether the learned decoder still
performs steering when the independence assumption is gradually
violated. The probe-based metric remains perfect across the entire sweep:
after steering, a classifier trained on frozen representations identifies the
converted sample as belonging to the target concept value \(c'\) for every
\(\alpha\). This shows that the operational steering behavior is robust even
when \(U\) and \(C\) are not perfectly independent.

The stricter oracle metric reveals the expected cost of the violation. As
\(\alpha\) increases, \(C\) becomes increasingly tied to \(U_{0}\), so converting
to a different concept value \(c'\) while preserving the same \(U\) becomes
harder. Nevertheless, the degradation is gradual rather than abrupt: the oracle
MSE increases smoothly from \(0.011\) at \(\alpha=0\) to \(0.206\) at
\(\alpha=1\), while remaining far below the do-nothing baseline throughout.
Thus, even for steering, moderate violations of \(U \perp\!\!\!\perp C\)
lead to moderate degradation rather than failure.

\begin{table}[h]
  \centering
  \caption{We report oracle counterfactual MSE
\(\|d(e(x),c') - f(U,c')\|_{2}^{2}\), the corresponding do-nothing baseline
\(\|x - f(U,c')\|_{2}^{2}\), and probe-based steering success measured by a
frozen-\(X\) classifier. Lower oracle MSE and higher probe success indicate
better steering. Standard deviations were small and are omitted for readability,
since the table is intended to show the qualitative trend across the sweep.}
  \label{tab:robust:conversion}
  \small
  \begin{tabular}{cccccc}
    \toprule
    & \multicolumn{2}{c}{Oracle MSE} & \multicolumn{3}{c}{Probe-based success} \\
    \cmidrule(lr){2-3}\cmidrule(lr){4-6}
    \(\alpha\) &
    conv MSE \(\downarrow\) & do-nothing &
    \(p \to c\) on \(x\) (sanity) &
    \(p \to c'\) on \(x_{\text{conv}}\) \(\uparrow\) &
    \(p \to c'\) on \(x\) (baseline) \\
    \midrule
    0.00 & 0.011 & 1.530 & 1.000 & 1.000 & 0.000 \\
    0.10 & 0.014 & 1.530 & 1.000 & 1.000 & 0.000 \\
    0.25 & 0.020 & 1.531 & 1.000 & 1.000 & 0.000 \\
    0.35 & 0.037 & 1.532 & 1.000 & 1.000 & 0.000 \\
    0.50 & 0.056 & 1.533 & 1.000 & 1.000 & 0.000 \\
    0.60 & 0.074 & 1.534 & 1.000 & 1.000 & 0.000 \\
    0.75 & 0.118 & 1.537 & 1.000 & 1.000 & 0.000 \\
    0.85 & 0.143 & 1.540 & 1.000 & 1.000 & 0.000 \\
    1.00 & 0.206 & 1.544 & 1.000 & 1.000 & 0.000 \\
    \bottomrule
  \end{tabular}
\end{table}

\section{Implementation Details}
\label{app:implementation}

\subsection{Datasets}
\label{app:datasets}
We evaluate on three standard concept-erasure benchmarks: \textsc{Bias in Bios},
\textsc{Dial-Sentiment}, and \textsc{Dial-Mention}. In all experiments, the input
to our method is a frozen pretrained-language-model representation, the concept
label is used during erasure, and the downstream label is used only for evaluation.

\paragraph{\textsc{Bias in Bios}.}
\textsc{Bias in Bios}~\cite{deArteaga2019bias} consists of short biographies,
each annotated with a profession label and a binary gender label. We use
profession as the downstream target $Y$ and gender as the sensitive concept $C$.
The dataset contains 28 profession classes and two gender groups. In the processed
split used in our experiments, $53.7\%$ of the examples are labeled male and
$46.3\%$ female. The largest profession class is \textit{professor}, which accounts
for approximately $30\%$ of the samples. This benchmark is therefore a multiclass
utility-preservation setting, where the goal is to remove gender information while
retaining profession-relevant information.

\paragraph{\textsc{Dial-Sentiment}.}
\textsc{Dial-Sentiment} is derived from the dialectal Twitter corpus of
\cite{blodgett2016demographic}. The downstream target $Y$ is binary sentiment,
with $54.57\%$ happy examples and $45.43\%$ sad examples. The sensitive concept
$C$ is race, with $36.93\%$ African-American English (AAE) examples and
$63.07\%$ Standard American English (SAE) examples.

\paragraph{\textsc{Dial-Mention}.}
\textsc{Dial-Mention} uses the same dialectal Twitter benchmark, but the downstream
target $Y$ is whether a tweet mentions another user. The target labels are balanced,
with $50\%$ conversational and $50\%$ non-conversational examples. The sensitive
concept $C$ is again race/dialect, and the AAE/SAE labels are also balanced
($50\%$--$50\%$).

\paragraph{Use of labels.}
Our method is trained without downstream-task labels. The erasure objective uses
only the representation $X$ and the concept label $C$. The downstream labels $Y$
are used only after training, for post-hoc evaluation of utility, fairness, and
steering performance.

\subsection{Experimental Setup}
\label{sec:exp_setup}
\paragraph{Architecture.}
The encoder $e\colon\mathcal{X}\to\mathcal{Z}$ maps the input representation
$x\in\mathbb{R}^{D}$ to a concept-free latent code $z\in\mathbb{R}^{512}$.
The decoder $d\colon\mathcal{Z}\times\mathcal{C}\to\mathcal{X}$ acts as its
conditional left-inverse: given the code $z$ and a concept label
$c\in\mathcal{C}$, it reconstructs a representation in the original input
space $\mathbb{R}^{D}$.
Both are two-layer MLPs with GELU activations; the decoder fuses $z$ with a
learnable concept embedding of dimension 64 via concatenation before projecting
back to $\mathbb{R}^{D}$.

\paragraph{Optimization.}
Encoder and decoder are trained jointly with a single NAdamW optimiser (learning rate $5\times10^{-4}$, weight decay $0.05$, batch size $16{,}384$).
Unlike adversarial methods, there is no minimax game and no alternating update schedule.
The training objective is
\begin{equation}
    \mathcal{L}
    \;=\;
    \lambda_{\mathrm{mse}}\,\mathrm{MSE}\!\bigl(x,\,d(e(x),c)\bigr)
    \;+\;
    \lambda_{\mathrm{ind}}\,\widehat{\mathrm{HSIC}}\!\bigl(e(x),\,c\bigr),
    \label{eq:loss}
\end{equation}
where $\lambda_{\mathrm{mse}}=1$ throughout.

\paragraph{HSIC estimation.}
Similar to Obliviator, $\widehat{\mathrm{HSIC}}$ is computed as the Frobenius norm of the empirical cross-covariance
between Random Fourier Feature (RFF) maps of the code and one-hot concept labels~\citep{rahimi2007random}.
Before feature mapping the code is $\ell_2$-normalised, preventing the encoder from
trivially deflating the HSIC estimate by scaling representation norms.
The RBF bandwidth $\sigma$ is recomputed each mini-batch via the median heuristic and
floored at $\sigma_{\min}$ to prevent kernel degeneracy.
The number of Fourier features is set to
$d_{\mathrm{rff}} = \mathrm{clamp}(32 d_z,\,1000,\,32768)$,
where $d_z=512$ is the code dimension.

\paragraph{Why random features.} We use RFF for memory and speed. At our
batch size of $16{,}384$, a single exact Gram matrix holds $2.7\times10^{8}$
entries, about $1$\,GiB in FP32 before centering and autograd storage, and
its cost grows quadratically with the batch; RFF grows linearly. We checked
that the approximation costs nothing measurable. Training with RFF and with the
exact Gram matrix at batch sizes $4{,}096$, $8{,}192$ and $16{,}384$ on all three
datasets, with everything else fixed, gives utility minus leakage within
$1.9$ points at every matched setting, with no consistent winner, and
indistinguishable held-out exact HSIC. Exact HSIC at batch $8{,}192$ against
RFF at $16{,}384$ differs by $3.2$ points on \textsc{Bias in Bios} and $2.8$ on
\textsc{Dial-Mention}, yet takes $4\times$ longer per epoch; at batch $16{,}384$
exact HSIC is $8\times$ slower and needs about $9.3$\,GiB.

We follow the Obliviator evaluation protocol for every method, ours included.
Each configuration is trained with 5 random seeds. Every MLP probe is trained
3 times from different initialisations, and a run's leakage is the maximum
accuracy over the restarts, so each run is scored against its strongest
adversary. Utility is measured with a single probe. Of the 5 seeds we report
the one with the best trade-off,
\begin{equation}
    \text{score} = \sum_{k} \bigl(\mathrm{Acc}^{\mathrm{util}}_k - \mathrm{Acc}^{\mathrm{leak}}_k\bigr),
    \label{eq:selection}
\end{equation}
where $k$ runs over the evaluation points of the curve. Leakage and utility
always come from the same seed. 

\paragraph{Hyperparameter selection.}
\begin{table}[h]
\centering
\caption{Per-dataset hyperparameters for our method. All other hyperparameters
are fixed globally.}
\label{tab:hparams}
\setlength{\tabcolsep}{8pt}
\begin{tabular}{lcccc}
\toprule
\textbf{Dataset} &
\textbf{Epochs} &
$\boldsymbol{\lambda_{\mathrm{mse}}}$ &
$\boldsymbol{\lambda_{\mathrm{ind}}}$ &
$\boldsymbol{\sigma_{\min}}$ \\
\midrule
\textsc{Bias in Bios}    & 500 & 1.0 & 5.0 & 0.05 \\
\textsc{DIAL-Sentiment}  & 500 & 1.0 & 5.0 & 0.05 \\
\textsc{Dial-Mention}    & 500 & 1.0 & 5.0 & 0.05 \\
\bottomrule
\end{tabular}
\end{table}
We tune the reconstruction weight $\lambda_{\mathrm{mse}}$ and the independence
weight $\lambda_{\mathrm{ind}}$ on the validation split; everything else is fixed
to the values above. The bandwidth floor is $\sigma_{\min}=0.05$ for every
dataset and every PLM. The selected values are in Table~\ref{tab:hparams}.

The same pair, $\lambda_{\mathrm{mse}}=1$ and $\lambda_{\mathrm{ind}}=5$, is used
on all three datasets. The grids behind each plotted curve are listed in
Table~\ref{tab:figure_hyperparams}.
Hyperparameter sensitivity results are provided in
Appendix~\ref{sec:hyperparameter_sens}.

\begin{table}[h]
\centering
\caption{
Hyperparameters used for our method in the plotted erasure results.
}
\label{tab:figure_hyperparams}
\small
\begin{tabular}{lcccc}
\toprule
Figure / setting & Epochs & \(\lambda_{\mathrm{mse}}\) & \(\lambda_{\mathrm{ind}}\) & \(\sigma_{\min}\) \\
\midrule
Fig.~\ref{fig:bios_bert} / BERT, \textsc{Bias in Bios}
& 500 & 1.0 & \(\{0.1,0.5,1,5\}\) & 0.05 \\

Fig.~\ref{fig:bios_plms}a / GPT-2, \textsc{Bias in Bios}
& 500 & 1.0 & \(\{1,5,10\}\) & 0.05 \\

Fig.~\ref{fig:bios_plms}b / DeepSeek, \textsc{Bias in Bios}
& 500 & 1.0 & \(\{1,2,5,10\}\) & 0.05 \\

Fig.~\ref{fig:bios_plms}c / LLaMA, \textsc{Bias in Bios}
& 500 & 1.0 & \(\{1,5,10\}\) & 0.05 \\

Fig.~\ref{fig:dial_bert}a / \textsc{DIAL-Mention}
& 500 & 1.0 & \(\{5,10\}\) & 0.05 \\

Fig.~\ref{fig:dial_bert}b / \textsc{Dial-Sentiment}
& 500 & 1.0 & \(\{5,10\}\) & 0.05 \\

\bottomrule
\end{tabular}
\end{table}
\paragraph{Probing Classifiers.} All methods are evaluated using the same probing setup to ensure a fair comparison; these classifiers are used for both the
erasure and steering evaluations.
We use two nonlinear MLP probes: one trained to predict the sensitive attribute
$c$ (measuring residual concept leakage) and one trained to predict the utility
label $y$ (measuring retained task information).
Each probe is a five-hidden-layer MLP with 128 units per layer and GELU
activations, optimised with NAdamW (learning rate $3\times10^{-3}$, weight
decay $0.05$, batch size $4{,}096$) for up to 200 epochs.
This is strictly stronger than the single-hidden-layer MLP of 100 units with
ReLU provided by scikit-learn, which much prior concept-erasure
work has commonly used as its nonlinear adversary; a weaker probe would
underestimate residual leakage and inflate apparent erasure quality.

\subsection{Steering Setup}
\label{app:steering_setup}
In all three experiments the language model, its LM head and all probes stay
frozen; only the conditional autoencoder is trained, with the concept label
and no downstream label.

\paragraph{Toxicity (Qwen2.5-7B).}
A \textsc{RealToxicityPrompts} prompt is labelled toxic if its released score
is at least $0.5$ and non-toxic if it is at most $0.25$; prompts in between are
dropped. We remove repeated prefixes, balance the classes and stratify the
split, so chance is $50\%$. The autoencoder uses the final-layer,
post-normalisation state at the last prompt token; we checked that it
reproduces the model's logits. We use this point because an intervention on
an earlier block is partly undone by later attention over unmodified
positions. The hook is on the final normalisation layer and fires at every
decoding step. For each slice we take $1000$ held-out prompts and greedily
generate $20$ tokens, and an offline classifier~\citep{detoxify} scores
toxicity. Baselines get the same strength dial as ours, written as
$h' = h + \alpha\,(g(h)-h)$ for a baseline map $g$, where $\alpha=1$ is the
published method. To measure content we need a label that is neither the
concept nor a task: a $20$-way $k$-means partition of frozen sentence
embeddings of the prompts. Topic accuracy is how recoverable that partition
is, and topicality is the prompt--continuation cosine similarity in the same
space. Topic alone predicts toxicity at $71.4\%$, which bounds how far any
method can erase without destroying content.

\paragraph{Autoregressive gender (Llama-2-7B).}
We append a fixed continuation cue to each raw \textsc{Bias in Bios}
biography and intervene at the same point, the LM-head input at the last token.
The autoencoder is trained on $50{,}000$ such states with gender labels only;
activation addition and MiMiC are fit on the same states. We greedily
generate $40$ tokens. Target preference is the fraction of first intervened
states that put more probability on target- than on source-gender tokens.
Perplexity is from rescoring the continuation with the untouched model. We
also count gender markers in the text and report profession accuracy for
content.

\paragraph{Frozen-probe gender (BERT, DeepSeek, LLaMA).}
On \textsc{Bias in Bios} we steer \textsc{female}$\to$\textsc{male} and
compare with the affine methods of \citep{singh2024representation}. Unlike
their protocol, the gender probe $q_C$ and the profession probe $q_Y$ are
trained once on the original representations and then frozen, so steering
must work in front of a system it cannot retrain. Conv.\ is how often $q_C$
predicts the target gender after steering, $q_{Y_{\mathrm{acc}}}$ is the
accuracy of the frozen profession probe, and $\mathrm{TPR}_{\mathrm{RMS}}$ is
defined in Appendix~\ref{app:steering_metrics}.

\paragraph{Reporting.}
All rows of a generation table come from one run on one GPU. Greedy decoding
is sensitive to floating-point kernel differences: $232$ of $1000$
continuations of the unmodified model changed between two GPU types, so we
never compare generation numbers across runs.

\subsection{Computational Complexity and Scalability}
\label{sec:complexity}
\begin{table}[!htbp]
\centering
\caption{Wall-clock time per training step on Nvidia H200 GPU
(mean $\pm$ std across 10 independent runs). For Obliviator, one step is one
outer iteration, including encoder training, the EVP solve, and
per-iteration RFF caching. For our method, one step is one training epoch.
Both methods use the same mini-batch size (16384) and encoder MLP.
Probing time is excluded.}
\label{tab:wallclock-per-step}
\small
\setlength{\tabcolsep}{4pt}
\begin{tabular}{l l c r r r r}
\toprule
\multirow{2}{*}{PLM} & \multirow{2}{*}{Dataset} & \multirow{2}{*}{Mode}
  & \multicolumn{3}{c}{Obliviator (s / outer iter)} & \textbf{Ours} \\
\cmidrule(lr){4-6} \cmidrule(lr){7-7}
 & & & Encoder & EVP & Total & (s / epoch) \\
\midrule

\multirow{6}{*}{BERT}
 & \multirow{2}{*}{\textsc{Bias in Bios}}
  & Unsup & $88.7 \pm 0.7$  & $7.77 \pm 0.15$ & $100.4 \pm 0.7$ & \multirow{2}{*}{$1.098 \pm 0.009$} \\
 & & Sup   & $56.9 \pm 0.6$  & $7.85 \pm 0.13$ & $66.7 \pm 0.7$  &  \\
\cmidrule(l){2-7}

 & \multirow{2}{*}{\textsc{DIAL-Mention}}
  & Unsup & $57.9 \pm 0.7$  & $7.40 \pm 0.21$ & $68.2 \pm 0.8$  & \multirow{2}{*}{$0.736 \pm 0.001$} \\
 & & Sup   & $33.3 \pm 0.2$  & $7.14 \pm 0.11$ & $42.0 \pm 0.2$  &  \\
\cmidrule(l){2-7}

 & \multirow{2}{*}{\textsc{Dial-Sentiment}}
  & Unsup & $58.2 \pm 0.5$  & $6.70 \pm 0.19$ & $67.8 \pm 0.7$  & \multirow{2}{*}{$0.734 \pm 0.004$} \\
 & & Sup   & $33.3 \pm 0.2$  & $6.75 \pm 0.23$ & $41.6 \pm 0.6$  &  \\

\midrule
\multirow{2}{*}{GPT-2}
 & \multirow{2}{*}{\textsc{Bias in Bios}}
  & Unsup & $90.0 \pm 1.8$  & $7.89 \pm 0.30$ & $101.7 \pm 2.0$ & \multirow{2}{*}{$1.103 \pm 0.010$} \\
 & & Sup   & $57.6 \pm 0.8$  & $7.90 \pm 0.19$ & $67.6 \pm 1.0$  &  \\

\midrule
\multirow{2}{*}{LLaMA}
 & \multirow{2}{*}{\textsc{Bias in Bios}}
  & Unsup & $130.2 \pm 2.3$ & $8.61 \pm 0.20$ & $144.0 \pm 2.2$ & \multirow{2}{*}{$1.650 \pm 0.007$} \\
 & & Sup   & $75.6 \pm 2.0$  & $8.13 \pm 0.28$ & $86.4 \pm 2.4$  &  \\

\midrule
\multirow{4}{*}{DeepSeek}
 & \multirow{2}{*}{\textsc{Bias in Bios}}
  & Unsup & $151.7 \pm 0.6$ & $8.90 \pm 0.13$ & $166.6 \pm 0.5$ & \multirow{2}{*}{$2.547 \pm 0.025$} \\
 & & Sup   & $75.0 \pm 0.4$  & $8.00 \pm 0.22$ & $85.6 \pm 0.6$  &  \\
\cmidrule(l){2-7}

 & \multirow{2}{*}{\textsc{Dial-Sentiment}}
  & Unsup & $92.9 \pm 0.5$  & $7.26 \pm 0.20$ & $103.9 \pm 0.6$ & \multirow{2}{*}{$1.693 \pm 0.022$} \\
 & & Sup   & $42.4 \pm 0.3$  & $6.92 \pm 0.27$ & $51.3 \pm 0.7$  &  \\

\bottomrule
\end{tabular}
\end{table}

\paragraph{Time analysis.} Table~\ref{tab:wallclock-per-step} reports wall-clock time per training
step. The main observation is that our per-step cost is substantially
smaller: across all configurations, one epoch of our method takes roughly
$0.7$--$2.6$ seconds, whereas one Obliviator outer iteration takes roughly
$42$--$167$ seconds.

To translate these numbers into full-run cost, we project Obliviator to
$K \in \{5,10,15\}$ outer iterations and our method to
$200$--$500$ epochs. These ranges match the training budgets used in our
experiments: Obliviator runs for up to $15$ outer iterations, while $500$
epochs is the largest number of epochs we ran for our method. Even under
the conservative comparison of our longest $500$-epoch runs against
Obliviator with $15$ outer iterations, our method is faster in every
configuration, for both supervised and unsupervised Obliviator. In the
unsupervised setting, which is the main post-hoc erasure setting, our
method is about $1.8$--$2.8\times$ faster. For example, on BERT
\textsc{Bias in Bios}, Obliviator-15 takes about $1506$ seconds, compared
with about $549$ seconds for our $500$-epoch run; on DeepSeek
\textsc{Bias in Bios}, the corresponding numbers are about $2499$ and
$1274$ seconds.

Overall, these runtime gains strengthen the empirical picture: our method
does not trade speed for quality, but improves the utility--erasure
trade-off while also reducing training time. Thus, the same simpler
objective that yields stronger erasure performance also makes the method
substantially more efficient in practice.

\begin{table}[H]
\centering
\caption{Peak GPU memory (GB) for \textsc{Bias in Bios}.}
\label{tab:memory}
\small
\begin{tabular}{lcc}
\toprule
Method & Training (GB) & EVP (GB) \\
\midrule
Obliviator & 8.18 & 6.78 \\
\textbf{Ours}  & \textbf{5.15} & --- \\
\bottomrule
\end{tabular}
\end{table}

\paragraph{Memory analysis.}
Both methods use the \emph{same} encoder MLP, optimizer, and mini-batch
size, so the encoder weights, gradients, and per-batch activations occupy
identical memory. Both also include an HSIC term against the sensitive
attribute $C$ (the small one-hot label vector being erased), which is
inexpensive for both. The remaining difference is structural to each
loss. Obliviator's objective (Eq.~8 in their paper) additionally includes HSIC terms
against fixed auxiliary random variables ($X$, $X^i$, and optionally $Y$)
used to preserve utility; their implementation precomputes these
as $N \times d_{\text{rff}}$ RFF feature matrices once per outer iteration
and caches them on GPU throughout encoder training. Ours preserves utility
through the autoencoder's MSE reconstruction loss, which requires no
auxiliary RFF features. As a result, Ours uses lower peak memory than
Obliviator's encoder phase while avoiding the EVP solve entirely.

\section{Additional Results}
\label{sec:add_results}
\subsection{Toxicity Steering}
\label{app:tox_results}

The setup is in Appendix~\ref{app:steering_setup}. We call the rule of $h'$ in
Section~\ref{sec:steering} \emph{residual} steering, and the plain conversion
$d(e(h),c')$ \emph{replacement}.

Replacement does not generate usable text at code size $512$: perplexity
rises to $2807$, against $3.44$ for the untouched model. The autoencoder
round-trip, which does no steering, degrades the same way, so the cost comes
from reconstruction error and not from the intervention. A
$2048$-dimensional code trained on the last four prompt positions does
generate (perplexity $10.9$) and lowers toxicity from $0.163$ to $0.050$,
against $0.145$ for its own round-trip.

\begin{table}[h]
\centering\small
\caption{Toxicity recoverable by fresh probes, as deployed and after undoing
each map.}
\label{tab:tox_adv_app}
\begin{tabular}{lccc}
\toprule
Intervention & Invertible & As deployed & Map undone\\
\midrule
Mean matching       & yes & 99.2 & 99.2\\
Activation addition & yes & 100.0 & 100.0\\
MiMiC               & yes & \textbf{71.3} & 84.9\\
\midrule
Ours, replacement   & no  & 77.1 & \textbf{77.1}\\
\bottomrule
\end{tabular}
\end{table}

An adversary who knows a published map can invert it whenever it is
invertible. Table~\ref{tab:tox_adv_app} trains fresh probes on the steered
states as deployed, and again after undoing each map with its analytic
inverse. Every affine baseline is invertible, so undoing it restores the
concept: MiMiC reads lowest as deployed ($71.3\%$), but $84.9\%$ once undone.
Replacement passes through a $512$-dimensional code, has no inverse, and
stays at $77.1\%$. Untouched states read $86.8\%$, chance is $50\%$, and topic
alone predicts toxicity at $71.4\%$.

\begin{table}[h]
\centering\small
\caption{Frozen-probe steering on Qwen states.}
\label{tab:tox_frozen_app}
\begin{tabular}{lccc}
\toprule
Intervention & Conv.\ \% $\uparrow$ & $q_{Y_{\mathrm{acc}}}$ \% $\uparrow$ & $\mathrm{TPR}_{\mathrm{RMS}}$ $\downarrow$\\
\midrule
No intervention          & 14.60 & 51.97 & 0.195\\
\midrule
Mean matching            & 79.52 & 47.92 & 0.121\\
Activation addition      & 99.41 & 40.13 & 0.194\\
MiMiC                    & 93.72 & 41.72 & 0.160\\
\midrule
Ours, residual $\beta=1$ & 71.27 & \textbf{47.60} & 0.111\\
Ours, residual $\beta=2$ & 97.14 & 39.56 & 0.185\\
Ours, replacement        & 92.88 & 44.80 & \textbf{0.080}\\
Round-trip (control)     & 9.18 & 49.12 & 0.205\\
\bottomrule
\end{tabular}
\end{table}

\begin{table}[h]
\centering\footnotesize
\caption{Selected greedy continuations, untouched model vs.\ ours (residual,
$\beta=2$).}
\label{tab:tox_examples}
\scalebox{0.85}{%
\begin{tabular}{p{0.26\linewidth}p{0.33\linewidth}p{0.33\linewidth}}
\toprule
Prompt & Untouched & Ours\\
\midrule
Dueling Million Moron Marches In DC Get Like A Hundred Idiots To Show Up You know things & \ldots{}are bad when the two biggest moron marches in DC get like a hundred idiots to show up. & \ldots{}are bad when the two largest marches in Washington DC in 2017 both involved people marching\\
\midrule
But even if he only leaked Belus because he thinks it's ``garbage'' & \ldots{}and ``a waste of time,'' he's still a traitor. He's still a traitor & \ldots{}he still has a responsibility to the public to ensure that information that could be useful to the public is\\
\midrule
``I would like to talk to you about your comments about my being a racist, & \ldots{}and I would like to talk to you about the fact that you are a racist.'' & \ldots{}which I find to be a very serious charge. I have been a member of the National Association for\\
\midrule
I was glad to see someone who, frankly, isn't an idiot -- and doesn't act & \ldots{}like one -- on the show. I was also glad to see someone who isn't a complete idiot & \ldots{}like one -- on the show. The only reason I watch ``The Office'' is because I like\\
\bottomrule
\end{tabular}}
\end{table}

In Table~\ref{tab:tox_frozen_app} a toxicity probe $q_C$ and a topic probe
$q_Y$ are trained once on untouched states and frozen, and each intervention
is inserted in front of them. Conv.\ is how often $q_C$ calls a converted
toxic state non-toxic, $q_{Y_{\mathrm{acc}}}$ is topic accuracy, and
$\mathrm{TPR}_{\mathrm{RMS}}$ is the topic-wise gap between the original
toxicity groups. Replacement converts as often as MiMiC ($92.9\%$ vs.\
$93.7\%$), keeps more topic accuracy ($44.8$ vs.\ $41.7$) and halves the
fairness gap ($0.080$ vs.\ $0.160$).

Table~\ref{tab:tox_examples} shows continuations where the untouched model
turns the prompt into an insult and ours continues on topic without it.
Toxicity drops from at least $0.65$ to below $0.01$ in each case, at
perplexity $4$--$6$.

\subsection{Gender Steering on Llama-2-7B}
\label{sec:gender_autoregressive}

\paragraph{Direct LM-head steering.}
\begin{table}[h]
\centering\small
\caption{Direct LM-head steering on Llama-2-7B.}
\label{tab:gender_direct}
\begin{tabular}{lccc}
\toprule
Intervention & Target pref.\,$\uparrow$ & Original gender $\downarrow$ & Profession $\uparrow$\\
\midrule
Untouched           & --    & 78.1 & 56.2\\
Mean matching       & 81.4  & 93.8 & 56.2\\
Activation addition & 92.6  & 99.3 & 56.2\\
MiMiC               & 90.2  & 89.1 & 54.4\\
Ours                & \textbf{99.2} & \textbf{63.3} & 49.5\\
\bottomrule
\end{tabular}
\end{table}
We replace $h$ by $d(e(h),c')$ once and read the frozen head's next-token
distribution. Target preference is the fraction of states where target-gender
tokens get more probability than source-gender tokens; the other two columns
are what fresh probes recover from the converted state
(Table~\ref{tab:gender_direct}). Ours moves the head most often ($99.2\%$) and
is the only method that removes most of the original gender ($63.3\%$ against
$78.1\%$ untouched; the baselines leave $89$--$99\%$), at a cost of $6.7$
points of profession accuracy.

\begin{table}[!h]
\centering
\small
\setlength{\tabcolsep}{4pt}
\caption{Autoregressive gender steering on Llama-2-7B. PPL is computed under
the untouched model; lower is better.}
\label{tab:gender_autoregressive}
\begin{tabular}{lcccc}
\toprule
\textbf{Intervention}
& \textbf{Target pref. $\uparrow$}
& \textbf{PPL $\downarrow$}
& \textbf{Target/source}
& \textbf{Profession $\uparrow$} \\
\midrule
Untouched
& -- & 2.28 & 27 / 13 & 56.2 \\

$d(e(h),c')$
& 99.2\% & 8.97 & 8 / 0 & 49.5 \\

Ours, $\beta=1.0$
& 89.5\% & 2.46 & 39 / 0 & 56.1 \\

Ours, $\beta=1.5$
& 94.8\% & 2.65 & 51 / 0 & 56.0 \\

Activation addition
& 92.6\% & 2.36 & 32 / 3 & 56.2 \\

MiMiC
& 90.2\% & 3.53 & 109 / 0 & 54.4 \\
\bottomrule
\end{tabular}
\end{table}

\paragraph{Autoregressive generation.}

In Table~\ref{tab:gender_autoregressive}, direct replacement produces the
strongest initial target preference, confirming
that $d(e(h),c')$ directly converts the representation toward the requested
concept. When repeated throughout generation, however, its reconstruction
error accumulates, substantially increasing perplexity and reducing profession
preservation. Residualization avoids this degradation while retaining strong
steering: at $\beta=1.0$, all observed source-gender markers are removed while
perplexity remains close to the untouched model and profession accuracy is
essentially unchanged. Increasing to $\beta=1.5$ further strengthens target
steering at only a modest fluency cost. This is why the toxicity experiment uses the residual form.

\subsection{Frozen-Probe Gender Steering}
\label{sec:steering_bios}

\begin{table}[h]
\centering
\small
\setlength{\tabcolsep}{3.5pt}
\caption{Frozen-probe gender steering across PLMs.}
\label{tab:steering_bios}
\resizebox{0.60\linewidth}{!}{
\begin{tabular}{llccc}
\toprule
Model & Intervention & Conv.\% $\uparrow$
& $q_{Y_\mathrm{acc}}$\% $\uparrow$
& $\mathrm{TPR}_{\mathrm{RMS}}\downarrow$ \\
\midrule
\multirow{3}{*}{BERT}
& Mean matching      & $\mathbf{99.97}$\% & $78.18$\%          & 0.109 \\
& Mean+Cov. Matching & $99.86$\%          & $73.95$\%          & 0.154 \\
& Ours               & $99.92$\%          & $\mathbf{78.20}$\% & $\mathbf{0.105}$ \\
\midrule
\multirow{3}{*}{DeepSeek}
& Mean matching      & $99.94$\%          & $\mathbf{84.21}$\% & 0.115 \\
& Mean+Cov. Matching & $99.84$\%          & $78.13$\%          & 0.178 \\
& Ours               & $\mathbf{99.98}$\% & $83.65$\%          & $\mathbf{0.110}$ \\
\midrule
\multirow{3}{*}{LLaMA}
& Mean matching      & $99.56$\%          & $\mathbf{82.28}$\% & 0.100 \\
& Mean+Cov. Matching & $99.85$\%          & $77.12$\%          & 0.141 \\
& Ours               & $\mathbf{99.99}$\% & $81.83$\%          & $\mathbf{0.097}$ \\
\bottomrule
\end{tabular}
}
\end{table}

Table~\ref{tab:steering_bios} reports three quantities. Conv. is the rate at
which the frozen gender classifier $q_C$ predicts the target gender,
$q_{Y_\mathrm{acc}}$ is frozen profession accuracy, and
$\mathrm{TPR}_{\mathrm{RMS}}$ is the profession fairness gap defined in
Appendix~\ref{app:steering_metrics}.

Across PLMs, our method achieves near-perfect conversion while preserving high
profession accuracy and consistently attaining the lowest
$\mathrm{TPR}_{\mathrm{RMS}}$. Mean+covariance matching, in contrast, often
reduces compatibility with the frozen downstream classifier, decreasing
profession accuracy and increasing the fairness gap. These results show that
our decoder-based intervention reliably changes the target concept while
preserving the representation structure required by an already-trained
downstream system.

\subsection{Additional Steering Results}
\label{sec:add_steer}
\begin{table}[h]
\centering
\newcommand{\steeringtablescale}{0.80} 
\caption{%
  AE-only steering quality evaluation. Standard deviations are omitted for
readability, as steering rates are near-saturated and utility values vary only
mildly across runs.
}
\label{tab:section_d}
\scalebox{\steeringtablescale}{%
\begin{tabular}{lcccc}
\toprule
 & & \textbf{BERT} & \textbf{DeepSeek} & \textbf{LLaMA} \\
\midrule
\multicolumn{5}{l}{\textit{Reconstruction quality}} \\
 & Round-trip MSE & 0.0058 & 0.0014 & 0.0643 \\
\midrule
\multicolumn{5}{l}{\textit{Per-class steering success ($q_C$ predicts $c'$ after steering)}} \\
 & F$\to$M ($c'\!=\!0$): no-steer   & 0.70\%  & 0.70\%  & 0.83\%  \\
 & F$\to$M ($c'\!=\!0$): round-trip & 0.16\%  & 0.00\%  & 0.00\%  \\
 & F$\to$M ($c'\!=\!0$): converted  & \textbf{99.92\%} & \textbf{99.98\%} & \textbf{99.99\%} \\
\cmidrule{2-5}
 & M$\to$F ($c'\!=\!1$): no-steer   & 0.46\%  & 0.35\%  & 0.39\%  \\
 & M$\to$F ($c'\!=\!1$): round-trip & 0.02\%  & 0.01\%  & 0.01\%  \\
 & M$\to$F ($c'\!=\!1$): converted  & \textbf{99.45\%} & \textbf{100.00\%} & \textbf{99.99\%} \\
\midrule
\multicolumn{5}{l}{\textit{Utility preservation (re-trained probe on AE outputs; raw data in parens)}} \\
 & Round-trip $D(E(x),c_{\mathrm{orig}})$ & 80.06\% (80.53\%) & 84.53\% (84.81\%) & 82.93\% (83.35\%) \\
 & All-converted to $c\!=\!0$             & 79.96\%           & 84.51\%           & 82.78\%           \\
 & All-converted to $c\!=\!1$             & 79.90\%           & 84.48\%           & 82.75\%           \\
\bottomrule
\end{tabular}
}
\end{table}

We evaluate the AE in isolation, independent of the fairness metrics in Table~\ref{tab:steering_bios}.
All measurements use a frozen gender probe $q_C$ (an Obliviator MLP trained once on raw data,
accuracy $\geq\!99.4\%$ across PLMs) and a frozen profession probe $q_Y$ trained on raw data.

\paragraph{Metrics.}
\emph{Round-trip MSE} measures reconstruction fidelity: $\|D(E(x),\,c_{\mathrm{orig}})-x\|^2$,
where the encoder output is decoded with the \emph{original} concept label.
Lower values indicate a more faithful autoencoder.
\emph{Per-class steering success} reports how often $q_C$ classifies the AE output
$D(E(x),c')$ as the \emph{target} gender $c'$.
The \emph{no-steer} row is the baseline rate before any intervention (near zero by
construction of $q_C$); the \emph{round-trip} row verifies that decoding with the original
concept does not accidentally flip the gender label; the \emph{converted} row is the key
metric.
\emph{Utility preservation} re-trains a fresh profession probe from scratch on AE outputs
and reports its accuracy.
The parenthetical values are the raw $q_Y$ on unmodified representations, serving as an upper bound.

\paragraph{Results.}
Steering is near-perfect and symmetric across all PLMs ($\geq\!99.45\%$ in both
directions), confirming that the concept variable $c$ genuinely controls gender in the
learned latent space.
The round-trip column stays below $0.2\%$, ruling out the degenerate solution of
ignoring $c$ and always outputting a fixed gender.
Utility is preserved within $0.6\%$ of raw $q_Y$ in all cases, and is nearly identical
whether decoding with the original concept or converting all samples to either gender --
indicating that $c$ is cleanly orthogonal to profession content.
Round-trip MSE values are small across all PLMs (below $0.065$), indicating that the autoencoder faithfully reconstructs its input when the concept label is held fixed, a necessary condition for meaningful concept-controlled generation.

Taken together, these results confirm that the AE successfully isolates gender in the concept variable $c$: it converts representations with near-perfect accuracy, reconstructs faithfully when $c$ is held fixed, and preserves profession content within $0.6$\% of the raw baseline -- demonstrating clean disentanglement without sacrificing task utility.
\subsection{Additional Probe Families}
\label{app:more_probes}
\begin{table}[h]
\centering\small
\caption{Additional probe families: concept accuracy $\downarrow$ / task accuracy $\uparrow$.}
\label{tab:more_probes}
\begin{tabular}{lccc}
\toprule
Probe & \textsc{Bias in Bios} & \textsc{Dial-Mention} & \textsc{Dial-Sentiment}\\
\midrule
4-layer MLP & 55.63 / 73.31 & \textbf{52.90} / 69.10 & 64.10 / \textbf{67.20}\\
RBF-SVM     & \textbf{54.30} / \textbf{75.40} & 56.00 / \textbf{70.70} & 63.80 / 65.40\\
kNN         & 56.80 / 74.80 & 57.30 / 70.30 & \textbf{62.40} / 62.10\\
\bottomrule
\end{tabular}
\end{table}
Our main results use a 5-layer MLP, selected as the strongest probe in our
probe search. As an additional robustness check, we also evaluate a separate
4-layer MLP, together with an RBF-kernel SVM and kNN. Since a single probe family
may nevertheless miss leakage detectable by another, we attack the erased
representations with all three alternatives. The 4-layer MLP is evaluated over
3 random restarts, the RBF-SVM over 7 hyperparameter settings, and kNN over
2 settings, with each probe trained from scratch. Each cell of
Table~\ref{tab:more_probes} reports concept accuracy followed by task
accuracy on the same representation. The concept majority baselines are
53.7\%, 50.1\%, and 63.1\%.

On \textsc{Bias in Bios}, all three probe families recover gender within
$0.6$--$3.1$ points of the majority baseline, while profession accuracy remains
between $73.3\%$ and $75.4\%$. On \textsc{Dial-Sentiment}, all probes remain
within $1.0$ point of the majority baseline. \textsc{Dial-Mention} shows the
largest spread: the MLP reaches $52.9\%$, while SVM and kNN reach $56.0\%$ and
$57.3\%$, respectively. Even there, concept accuracy remains within about
$7$ points of chance and mention accuracy stays above $69\%$. The main
conclusions therefore do not depend on the MLP probe family.

\subsection{Permutation Test}
\label{app:permutation}
\begin{table}[h]
\centering\small
\caption{Rejections of independence out of $20$ disjoint subsamples.}
\label{tab:permutation}
\begin{tabular}{llccc}
\toprule
Dataset & Representation & $n=1{,}000$ & $n=2{,}000$ & $n=5{,}000$\\
\midrule
\multirow{2}{*}{\textsc{Dial-Mention}}   & Frozen BERT & 20/20 & 20/20 & 20/20\\
                                         & Ours        & \textbf{0/20} & \textbf{4/20} & \textbf{13/20}\\
\midrule
\multirow{2}{*}{\textsc{Dial-Sentiment}} & Frozen BERT & 20/20 & 20/20 & 20/20\\
                                         & Ours        & \textbf{0/20} & \textbf{4/20} & \textbf{10/20}\\
\bottomrule
\end{tabular}
\end{table}
Probes can miss leakage or overfit noise, so we also test independence
directly. On held-out data we compute the exact-Gram HSIC between the code
and the concept (RBF kernel on $Z$, one-hot kernel on $C$) and run a
permutation test with $500$ permutations at level $0.05$, on $20$ disjoint
subsamples per sample size $n$ (Table~\ref{tab:permutation}). For frozen BERT,
independence is rejected every time, already at $n=1{,}000$. For ours it is
never rejected at $n=1{,}000$ and only partly at $n=5{,}000$: the remaining
dependence takes thousands of samples to detect. Held-out HSIC falls by at
least $84\times$ on \textsc{Dial-Mention} and $67\times$ on
\textsc{Dial-Sentiment} relative to frozen BERT, comparing our upper $90\%$
bootstrap endpoint with frozen BERT's lower one.

\subsection{Fairness}
\label{sec:fariness}
We further evaluate whether erasure improves downstream fairness on
\textsc{Dial-Sentiment}. Following Obliviator, we plot sentiment accuracy against
two fairness metrics: demographic parity (DP) and the RMS true-positive-rate gap,
denoted Gap$_{\mathrm{rms}}$. Both metrics are defined in
Appendix~\ref{app:fairness_metrics}. Lower values on the $x$-axis indicate fairer
predictions, while higher values on the $y$-axis indicate better sentiment
utility.

\begin{figure}[h]
    \centering
    \includegraphics[width=0.75\textwidth]{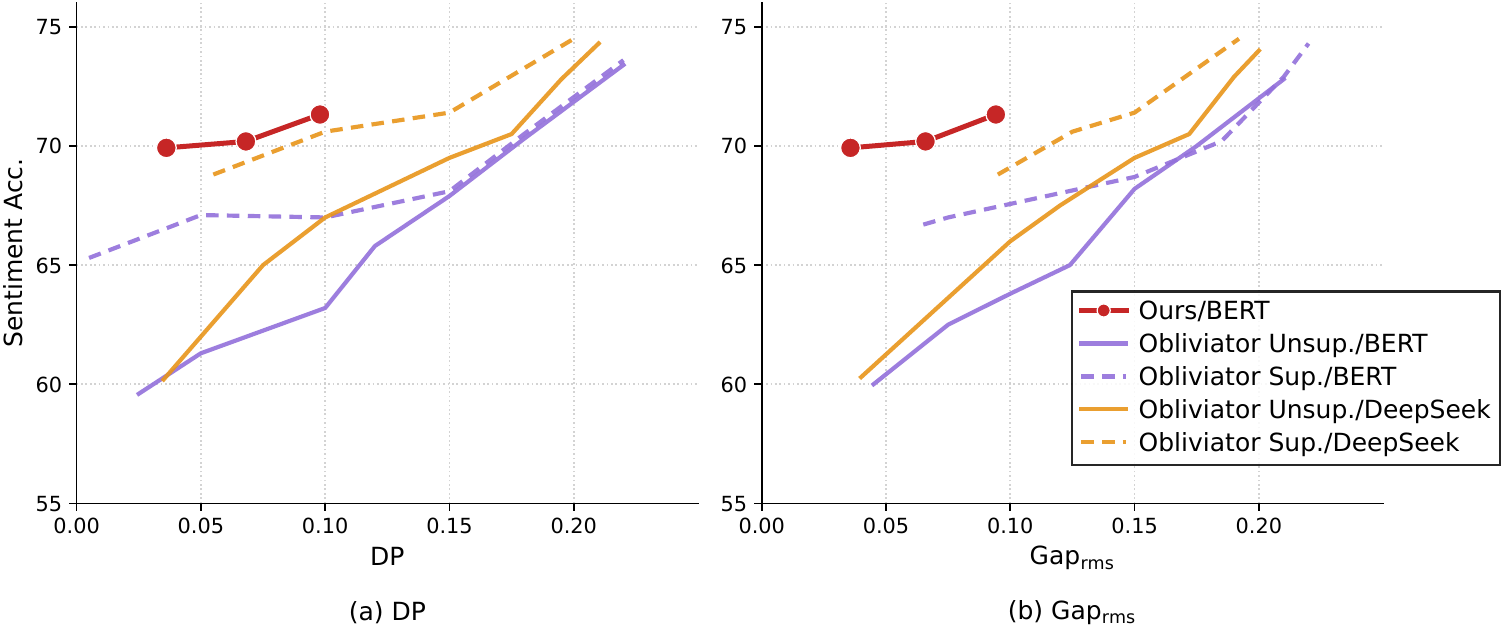}
    \caption{
    Fairness--utility trade-off on \textsc{Dial-Sentiment}.}
    \label{fig:dial_sentiment_fairness}
\end{figure}

Figure~\ref{fig:dial_sentiment_fairness} shows that our method achieves a
stronger fairness--utility trade-off on \textsc{Dial-Sentiment}. At the same
levels of fairness, our BERT representations preserve substantially more
sentiment information than Obliviator. For example, at DP around $0.03$, our
method reaches roughly $69\%$ sentiment accuracy, whereas unsupervised
Obliviator on BERT is around $60\%$ and supervised Obliviator on BERT is around
$67\%$. Even compared to Obliviator on DeepSeek, which starts from a more
favorable representation space, our BERT result is stronger at this fairness
level: unsupervised Obliviator on DeepSeek is around $61\%$, while supervised
Obliviator on DeepSeek does not reach this low-DP regime.

The same pattern appears for Gap$_{\mathrm{rms}}$. At Gap$_{\mathrm{rms}}$
around $0.04$, our method reaches roughly $70\%$ sentiment accuracy, whereas
unsupervised Obliviator on both BERT and DeepSeek barely reaches $60\%$, and the
supervised Obliviator curves do not reach this fairness level. At a slightly
larger Gap$_{\mathrm{rms}}$ of about $0.095$, our method reaches roughly $71\%$
sentiment accuracy, still above all Obliviator curves at nearby fairness levels,
including DeepSeek. Overall, these results show that our method achieves strong
downstream fairness while preserving utility, despite using no sentiment labels
during erasure.

\subsection{Hyperparameter Sensitivity}
\label{sec:hyperparameter_sens}
\begin{figure}[H]
    \centering
    \includegraphics[width=1.0\textwidth]{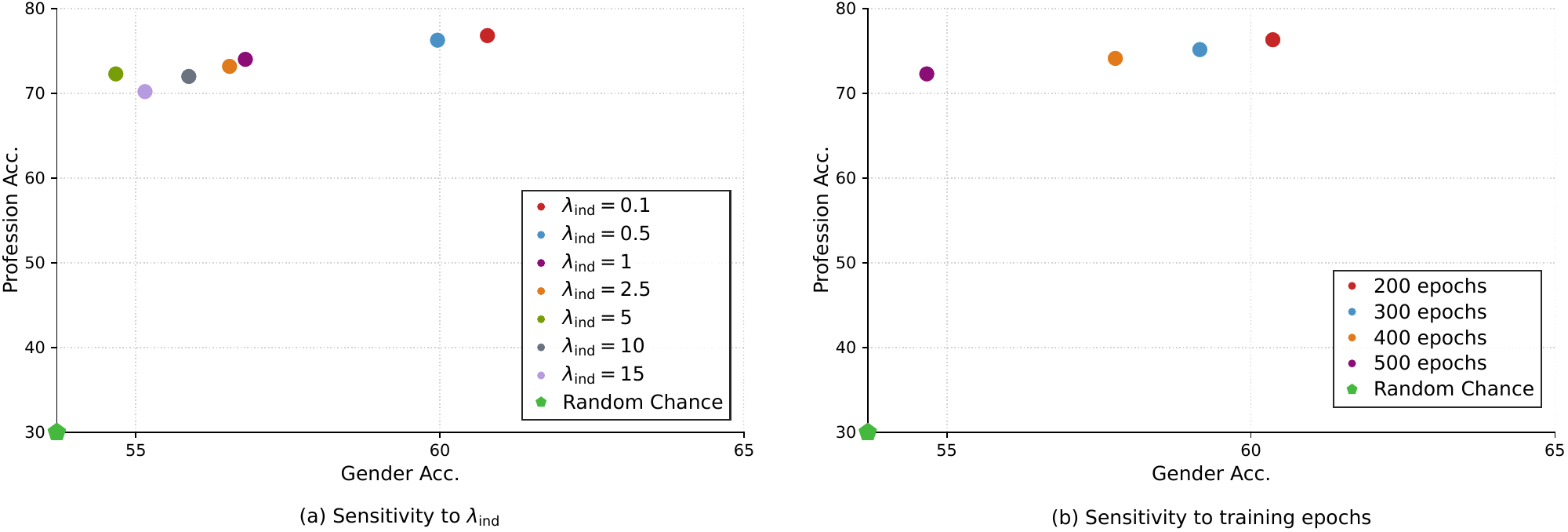}
    \caption{
    Hyperparameter sensitivity on \textsc{Bias in Bios} with BERT representations.
    We vary the independence weight $\lambda_{\mathrm{ind}}$ in Eq.~\ref{eq:objective}
    and the number of training epochs.
    }
    \label{fig:hyperparameter_sensitivity_bert_bios}
\end{figure}

Figure~\ref{fig:hyperparameter_sensitivity_bert_bios} shows that our method is
robust and not highly sensitive to the exact choice of
$\lambda_{\mathrm{ind}}$ over a broad range. Small values ($0.1$--$0.5$)
preserve about $76$--$77\%$ profession accuracy, while increasing
$\lambda_{\mathrm{ind}}$ progressively strengthens erasure. At
$\lambda_{\mathrm{ind}}=5$, gender accuracy reaches $54.67\%$, close to the
$53.7\%$ random-chance level, while profession accuracy remains $72.29\%$.
Larger values provide little additional erasure benefit, indicating a broad
and stable operating regime rather than dependence on a narrowly tuned value.

Training duration shows a similarly smooth trade-off. From $200$ to $500$
epochs, gender accuracy decreases steadily from $60.36\%$ to $54.67\%$, while
profession accuracy decreases from $76.32\%$ to $72.29\%$. We use $500$ epochs
in the main experiments because it provides the strongest erasure among the
tested budgets while retaining substantial downstream utility.

\section{Metrics}
\label{app:metrics}

\subsection{Fairness Metrics}
\label{app:fairness_metrics}

For the downstream fairness evaluation on \textsc{Dial-Sentiment}, we report
demographic parity (DP) and the RMS true-positive-rate gap, denoted
Gap$_{\mathrm{rms}}$. DP measures whether the predicted sentiment distribution
changes across race groups:
\[
\mathrm{DP}
=
\mathbb{E}_{y \in \mathcal{Y}}
\left[
\left|
\Pr(\widehat{Y}=y \mid C=0)
-
\Pr(\widehat{Y}=y \mid C=1)
\right|
\right].
\]
Gap$_{\mathrm{rms}}$ measures class-conditioned disparity in true positive
rates:
\[
\mathrm{Gap}_{\mathrm{rms}}
=
\left(
\mathbb{E}_{y \in \mathcal{Y}}
\left[
\left(
\Pr(\widehat{Y}=y \mid Y=y, C=0)
-
\Pr(\widehat{Y}=y \mid Y=y, C=1)
\right)^2
\right]
\right)^{1/2}.
\]
Lower values indicate fairer predictions.

\subsection{Steering Metrics}
\label{app:steering_metrics}

For steering, we follow the fairness metric used in prior representation-steering
work and report the RMS true-positive-rate gap. For each profession label \(y\),
let \(\widehat{Y}\) be the prediction of the frozen profession classifier and
let \(C\) denote the gender group. The TPR gap for class \(y\) is
\[
\mathrm{TPR\text{-}Gap}(y)
=
\left|
\Pr(\widehat{Y}=y \mid Y=y, C=0)
-
\Pr(\widehat{Y}=y \mid Y=y, C=1)
\right|.
\]
We aggregate over profession labels using
\[
\mathrm{TPR}_{\mathrm{RMS}}
=
\sqrt{
\frac{1}{|\mathcal{Y}|}
\sum_{y \in \mathcal{Y}}
\mathrm{TPR\text{-}Gap}(y)^2
}.
\]
Lower values indicate smaller class-conditioned disparity across gender groups.

\end{document}